\documentclass{article} 

\usepackage[table]{xcolor}
\usepackage{preprint,times}

\renewcommand{\sc}{\bfseries}

\usepackage{amsmath,amsfonts,bm}

\def\eqref#1{Equation~\ref{#1}}

\def\1{\bm{1}}

\DeclareMathAlphabet{\mathsfit}{\encodingdefault}{\sfdefault}{m}{sl}
\SetMathAlphabet{\mathsfit}{bold}{\encodingdefault}{\sfdefault}{bx}{n}

\usepackage{hyperref}
\usepackage{url}
\usepackage{subcaption}
\usepackage{booktabs}
\usepackage{multirow}
\usepackage{graphicx}
\usepackage{amsfonts, amsmath, amsthm, amssymb} 
\usepackage{bm}
\usepackage{bbm}
\usepackage{marvosym}
\usepackage{mathtools}
\usepackage[ruled,vlined,linesnumbered]{algorithm2e}
\usepackage{natbib}
\usepackage[nameinlink]{cleveref}
\usepackage{adjustbox}   
\usepackage{enumitem}   
\usepackage{hyperref}
\usepackage{tikz}
\usepackage[most]{tcolorbox}
\usepackage{tabularx}
\usepackage{array}

\tcbset{casebox/.style={%
  enhanced, breakable,
  arc=3pt, boxrule=1pt, titlerule=0.5pt,
  fonttitle=\bfseries,
  left=7pt, right=7pt, top=5pt, bottom=5pt,
  toptitle=3pt, bottomtitle=3pt}}
\newtcolorbox{problemcase}[1][]{casebox,
  colframe=black!60, colback=black!4, colbacktitle=black!8,
  coltitle=black!60,
  title={Problem\hfill{\normalfont\footnotesize #1}}}
\newtcolorbox{opdcase}[1][]{casebox,
  colframe=red!55!black, colback=red!55!black!4!white,
  colbacktitle=red!55!black!8!white, coltitle=red!55!black,
  fontupper=\small\ttfamily,
  title={OPD Student\hfill{\normalfont\footnotesize #1}}}
\newtcolorbox{tidecase}[1][]{casebox,
  colframe=green!40!black, colback=green!40!black!4!white,
  colbacktitle=green!40!black!8!white, coltitle=green!40!black,
  fontupper=\small\ttfamily,
  title={\method{} Student\hfill{\normalfont\footnotesize #1}}}

\newcommand{\method}{TIDE}
\definecolor{citecolor}{HTML}{2A6F8F}
\definecolor{figrefcolor}{HTML}{A23B5A}
\hypersetup{
    colorlinks=true,
    citecolor=citecolor,
    linkcolor=figrefcolor,
    urlcolor=citecolor
}

\definecolor{tideblue}{HTML}{08519C}
\newtcolorbox{takeaway}[1]{%
  enhanced, breakable,
  arc=3pt, boxrule=1pt, titlerule=0.5pt,
  colframe=tideblue, colback=tideblue!4!white,
  colbacktitle=tideblue!8!white, coltitle=tideblue,
  fonttitle=\bfseries,
  left=7pt, right=7pt, top=5pt, bottom=5pt,
  toptitle=3pt, bottomtitle=3pt,
  title={#1}}
  
\crefname{figure}{Fig.}{Figs.}
\Crefname{figure}{Fig.}{Figs.}
\crefname{table}{Table}{Tables}
\Crefname{table}{Table}{Tables}
\crefname{section}{Section}{Sections}
\Crefname{section}{Section}{Sections}
\crefname{subsection}{Section}{Sections}
\Crefname{subsection}{Section}{Sections}
\crefname{appendix}{Appendix}{Appendices}
\Crefname{appendix}{Appendix}{Appendices}
\crefname{subappendix}{Appendix}{Appendices}
\Crefname{subappendix}{Appendix}{Appendices}
\crefname{equation}{Equation}{Equations}
\Crefname{equation}{Equation}{Equations}
\crefname{algorithm}{Algorithm}{Algorithms}
\Crefname{algorithm}{Algorithm}{Algorithms}

\usetikzlibrary{arrows.meta,positioning,calc}
\definecolor{groupbg}{HTML}{ECEFF3}   
\definecolor{oursbg}{HTML}{E6F0FA}    
\definecolor{gaincol}{HTML}{0B7A3B}   
\definecolor{dropcol}{HTML}{B3261E}   

\newtheorem{proposition}{Proposition}
\DeclareMathOperator{\sg}{sg}
\newcommand{\gain}[1]{\textcolor{gaincol}{#1}}
\newcommand{\drop}[1]{\textcolor{dropcol}{#1}}

\crefname{theorem}{Theorem}{Theorems}
\Crefname{theorem}{Theorem}{Theorems}
\crefname{proposition}{Proposition}{Propositions}
\Crefname{proposition}{Proposition}{Propositions}
\crefname{assumption}{Assumption}{Assumptions}
\Crefname{assumption}{Assumption}{Assumptions}
\crefname{definition}{Definition}{Definitions}
\Crefname{definition}{Definition}{Definitions}
\crefname{corollary}{Corollary}{Corollaries}
\Crefname{corollary}{Corollary}{Corollaries}

\makeatletter
\let\iclr@originalfnsymbol\fnsymbol
\renewcommand{\fnsymbol}[1]{%
  \ifnum\value{#1}=1\relax
    \text{\Letter}%
  \else
    \iclr@originalfnsymbol{#1}%
  \fi}
\makeatother

\title{Mismatch Matters: On-Policy Distillation Beyond Token Agreement}
\author{
\textbf{Zichao Yu}$^{1}$ \quad
\textbf{Chengzhi Yu}$^{2}$ \quad
\textbf{Shengze Xu}$^{3}$ \quad
\textbf{Yujin Han}$^{1}$ \\
\textbf{Bingqing Jiang}$^{1}$ \quad
\textbf{Xu Wang}$^{1}$ \quad
\textbf{Difan Zou}$^{1}$\thanks{Corresponding author: dzou@cs.hku.hk} \\[5pt]
{\normalfont\small
$^{1}$The University of Hong Kong \quad
$^{2}$University of Science and Technology of China} \\
{\normalfont\small
$^{3}$The Chinese University of Hong Kong}
}

\iclrpreprintcopy
\begin{document}

\maketitle

\begin{abstract}
On-policy distillation (OPD) has emerged as a core component of modern LLM post-training pipelines, yet we reveal a failure mode: \emph{degenerate agreement}, where students exploit repetitive loops to achieve near-perfect token agreement with the teacher despite globally flawed responses \footnote{Representative examples of degenerate agreement are provided in
\cref{app:case_studies}.}.
We therefore shift our focus from agreement to teacher--student mismatch and find that the mismatch tokens can be mainly categorized to two types: student-excess tokens and student-deficit tokens. Specifically, student‑excess tokens are those generated by the student but assigned near‑zero probability by the teacher; their log‑ratio corrections grow unbounded and destabilize the update. Student‑deficit tokens, in contrast, are preferred by the teacher but rarely sampled by the student; their absence blocks the transfer of the teacher’s reasoning patterns. 
To tackle these mismatch directions, we propose \textbf{TIDE} (Token-level
Independent Deficit--Excess correction), which applies bounded Hellinger shaping to suppress the most severe sampled excesses and an
analytic teacher top-\(K\) injection to restore deficient probability mass without requiring deficit tokens to be sampled. Across mathematical reasoning benchmarks with multiple Qwen3 teacher--student pairs, TIDE consistently outperforms standard OPD and recent token-selection and reward-shaping baselines. Besides, the gains of TIDE are more pronounced
under strong teacher--student mismatch, where it improves Avg@8 from \(6.9\%\) to \(20.3\%\), reduces average response length by a factor of
\(3.6\), and substantially reduces formatting failures. 
Our code is available at
\url{https://github.com/yzc-666/TIDE}.

\end{abstract}
\begin{figure}[h]
\centering
\includegraphics[width=\textwidth]{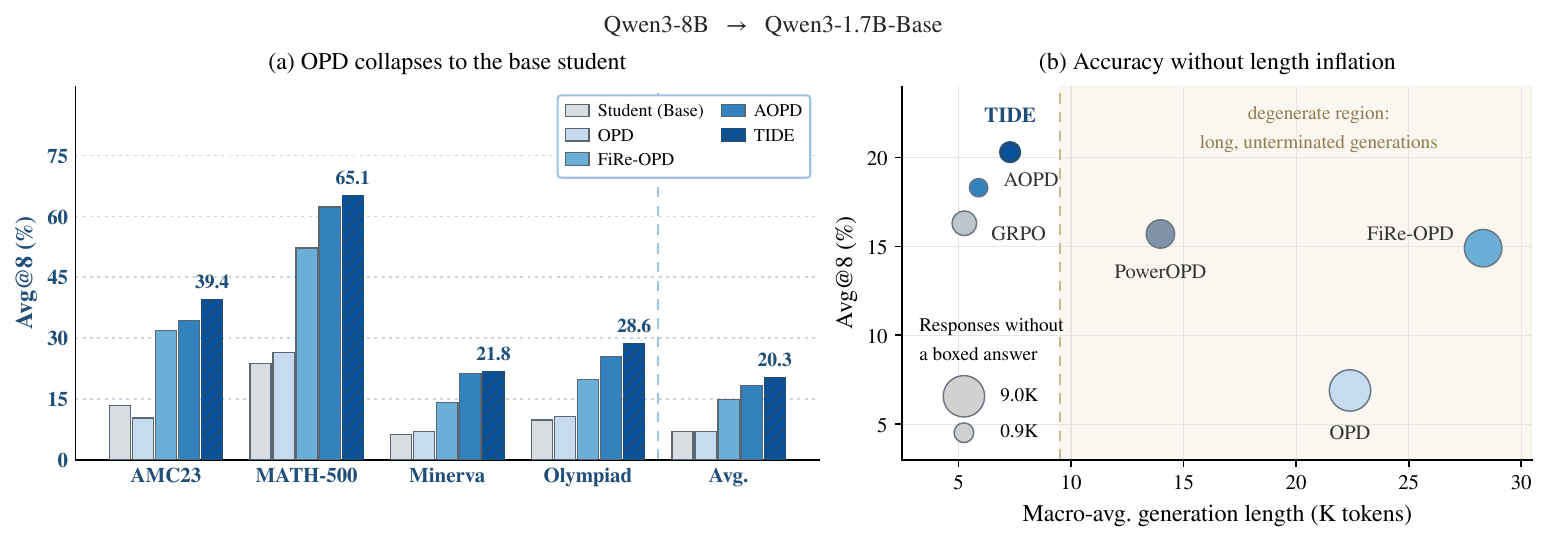}
\caption{\textbf{\method{} achieves the highest accuracy without length
inflation.} \textbf{(a)} \method{} consistently outperforms the baselines in
Avg@8. \textbf{(b)} Unlike OPD and FiRe-OPD, \method{} maintains short,
well-formed generations.}
\label{fig:teaser}
\end{figure}
\section{Introduction}

On-policy distillation (OPD)~\citep{lu2025onpolicydistillation} has emerged as a powerful and widely adopted paradigm for post-training large language models (LLMs)~\citep{agarwal2024policy,gu2024minillm}, and has become a standard component in recent open-weight frontier models such as Qwen3, MiMo-V2-Flash, GLM-5, and Kimi K3~\citep{yang2025qwen3,xiao2026mimo,zeng2026glm,team2026kimi}. Rather than relying on fixed expert demonstrations, OPD samples trajectories from the student policy and queries an external teacher at the states the student actually visits. This on-policy supervision mitigates the exposure bias of supervised fine-tuning (SFT)~\citep{bengio2015scheduled,ross2011reduction,agarwal2024policy}, while offering denser token-level feedback than the sparse outcome rewards used in reinforcement learning with verifiable rewards (RLVR)~\citep{shao2024deepseekmath,guo2025deepseek,yu2026dapo}.

\begin{figure}[t]
    \centering
    \includegraphics[width=0.95\columnwidth]
    {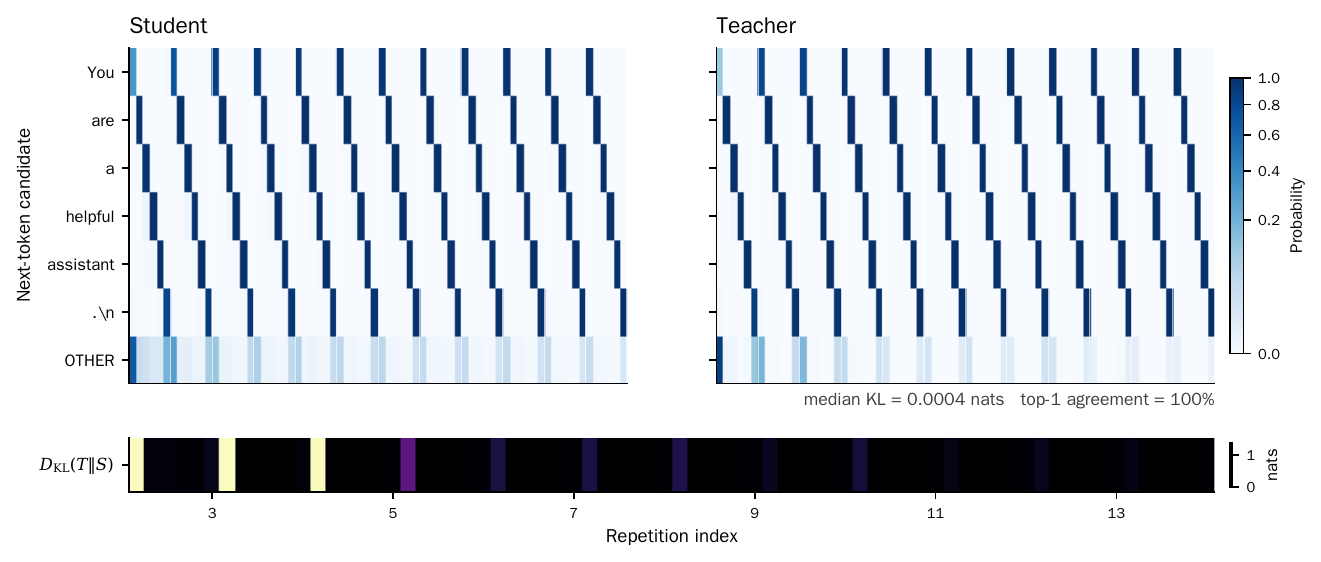}
    \caption{
    \textbf{Repetition creates degenerate agreement.}
    Visualization of a representative OPD rollout in which the student
    repeatedly generates the sequence
    \texttt{You are a helpful assistant.\textbackslash n}. The left and right
    heatmaps show the student and teacher next-token distributions conditioned
    on progressively longer repetitive prefixes; rows correspond to tokens in
    the repeating unit, and darker colors indicate higher probability. As
    repetition continues, both models concentrate on the same periodic
    continuation: their top-1 predictions agree at \(100\%\), while the median
    teacher--student KL falls to \(0.0004\) nats, as shown in the bottom strip.
    The globally degenerate rollout therefore appears locally well-aligned
    under token-level supervision.
    }
    \label{fig:repetition-observation}
\end{figure}

Despite these advantages, OPD is prone to abrupt length inflation, with
student generations drifting toward the decoding cap~\citep{luo2026demystifying}.
Our diagnostics trace this to persistent repetitive loops (see detailed analysis in \cref{sec:teacher-hacking} and  one example in~\Cref{fig:repetition-observation}): the student repeatedly generates an identical prefix
\texttt{"You are a helpful assistant.\textbackslash n"};
conditioned on the student-generated prefix, the teacher finds each additional repetition increasingly predictable, until the two distributions (generated by student and teacher) nearly coincide (median KL \(0.0004\) nats).  Critically, the student can thus drive teacher–student divergence to near zero while the overall response remains globally degenerate, revealing that high token-level agreement can actively mask global degeneration. We term this phenomenon \emph{degenerate agreement}, and the underlying mechanism \emph{student-induced teacher hacking}: repetition renders a degenerate continuation locally consistent to the teacher. This also resembles the low‑KL trap recently observed for corrupted student prefixes~\citep{xin2026escaping}.

A prevalent principle in recent OPD research is that training should focus on high teacher–student overlap to obtain valuable supervision. This assumption underpins recent work on token selection~\citep{li2026rethinking,zhao2026prefix,yang2026prune}, which treats high-overlap tokens as the most useful learning signals. However, this overlap-based criterion may also suffer from the risk of degenerate agreement: repetitive generations can produce artificially high overlap and then the teacher signal offers little new information. In such cases, high agreement ceases to be a reliable indicator of supervisory quality. This motivates us to shift the focus from agreement to mismatch (or disagreement) as the basis for supervision, and explore divergent tokens as an alternative, potentially more informative source of learning signal.

Concretely, we find that two types of mismatch tokens are rather distinct and important (see Section \ref{sec:prelim-opd} for details): student-excess and student-deficit tokens. In particular, \emph{Student-excess tokens} are overproduced by the student relative to the teacher; they are readily observed in rollouts, yet their log-ratio difference, can grow unbounded as the teacher probability approaches zero, destabilizing updates. 
For instance, in our OPD training, the most negative \(1\%\) of tokens
contribute nearly half of the total gradient.
\emph{Student-deficit tokens}, conversely, are favored by the teacher but underweighted by the student. These tokens have a median probability of just \(2.04\times10^{-4}\); under our four-rollout budget, \(95.5\%\) of them are sampled with less than \(1\%\) probability. Despite this inaccessibility, we find that these deficit tokens carry rather valuable supervisory signal missed by standard on-policy supervision (we perform a simple deficit-only intervention and raise Avg@8 from \(6.87\) to \(18.32\) (see~\cref{app:deficit-accessibility}). 

\begin{figure*}[t]
\centering
\includegraphics[width=\textwidth]{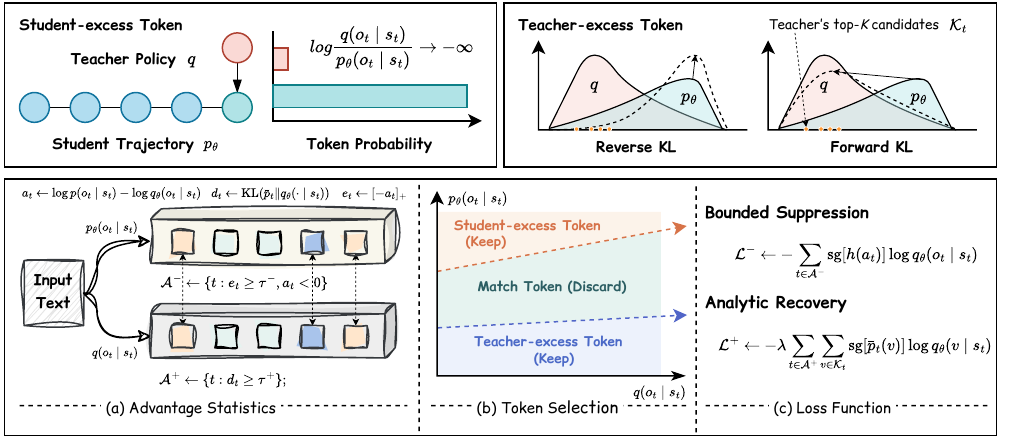}
\caption{
\textbf{An overview of TIDE pipeline.}
Student-excess tokens are readily observed in student rollouts, but can produce unbounded negative advantages when assigned near-zero probability by the teacher. In contrast, student-deficit tokens are unlikely to be sampled and are therefore identified analytically using forward KL over the teacher's top-\(K\) candidates. TIDE computes branch-specific mismatch statistics (a), independently retains positions with severe excess or deficit while discarding low-mismatch positions (b), and applies bounded suppression to excess tokens and analytic probability-mass recovery to deficit tokens (c).
}
\label{fig:tide-pipeline}
\end{figure*}


Based on our findings on the token-level mismatch, we propose \textbf{TIDE}
(\textbf{T}oken-level \textbf{I}ndependent
\textbf{D}eficit--\textbf{E}xcess correction), a correction mechanism that deliberately leverages mismatch tokens to achieve more informative and stable on-policy distillation (OPD) training. For student
excess tokens, TIDE replaces the unbounded log‑ratio with a principled Hellinger‑shaped weight derived from the squared Hellinger divergence, which enforces bounded suppression without heuristic clipping or allowing extreme tokens to dominate the update. For
student deficit tokens, TIDE introduces an analytic teacher top‑
$K$ objective to restore missing probability mass directly, bypassing the need for rare rollouts. At each mini‑batch, TIDE independently evaluates both mismatch directions and selectively applies the corresponding correction only to the most severe positions, discarding all neutral tokens from the loss. We further develop theoretical results demonstrating that the excess branch provides bounded, locally faithful corrections aligned with a proper divergence geometry, while the deficit branch directly addresses an irreducible coverage gap that sampling‑based methods cannot resolve. Empirical evaluations on mathematical reasoning benchmarks and various teacher–student pairs (including Qwen3 family models \citep{yang2025qwen3}) confirm that TIDE consistently outperforms strong baselines, with ablation studies further validating the complementary roles of its two correction components. Our contributions are summarized as follows.


\begin{itemize}[leftmargin=*]
    \item We identify \emph{student-induced teacher hacking} as a previously
    overlooked failure mode of OPD. Once the student enters a repetitive
    prefix, the teacher can favor the same continuation, creating a low-KL
    matched region in which degeneration receives little corrective signal.

    \item We reveal an intrinsic asymmetry in informative teacher--student
    disagreement. Student-excess tokens are readily observed but can induce
    unstable updates, whereas student-deficit tokens carry valuable teacher
    information but are rarely observed in student rollouts. 

    \item We introduce TIDE, which combines quantile-gated Hellinger
    suppression for student excess with analytic teacher-top-\(K\) recovery
    for student deficit while removing redundant supervision from matched
    tokens. TIDE consistently outperforms
strong OPD baselines; notably, for our Qwen experiment setting, it raises Avg@8 from $6.87$ to $20.20$ while reducing
the average response length from $22{,}395$ to $7{,}294$ tokens.
\end{itemize}

\section{Problem Setup and Empirical Observations}
\label{sec:problem-setup}

\subsection{Problem Setup}
\label{sec:prelim-opd}

We begin by establishing the basic setup. Let $\mathcal D$ denote the distribution over input prompts $x$, and let $\mathcal V$ be the vocabulary. At a given state $s$, the fixed teacher and the trainable student define next-token distributions $p(\cdot| s)$ and $q_\theta(\cdot| s)$, respectively, where $\theta$ denotes the student parameters. Since both models use a softmax with full support over the vocabulary, these probabilities are strictly positive for every token $v\in\mathcal V$. For a prompt $x\sim\mathcal D$, the student autoregressively generates a response $o=(o_1,\ldots,o_T)\sim q_\theta(\cdot| x)$, where $T$ denotes the total response length, including the end-of-sequence token. At each position $t$, the state visited by the student is $s_t=(x,o_{<t})$. The corresponding trajectory likelihoods factorize autoregressively in the usual way: for instance, $q_\theta(o| x)=\prod_{t=1}^T q_\theta(o_t| s_t)$, and analogously for the teacher $p(o| x)$.

The objective of OPD is to minimize the expected trajectory-level reverse KL divergence:
\begin{equation}
\label{eq:opd}
\mathcal L_{\mathrm{OPD}}(\theta)
=\mathbb E_{x\sim\mathcal D}\!\left[
D_{\mathrm{KL}}\!\left(q_\theta(\cdot| x)\,\|\,p(\cdot| x)\right)
\right]
=\mathbb E_{\substack{x\sim\mathcal D\\o\sim q_\theta(\cdot| x)}}
\!\left[\sum_{t=1}^{T}\log
\frac{q_\theta(o_t| s_t)}{p(o_t| s_t)}\right],
\end{equation}
where $D_{\mathrm{KL}}$ is the Kullback--Leibler divergence.

Following recent practice~\citep{lu2025onpolicydistillation}, we implement this objective using a detached token-level policy-gradient estimator~\citep{schulman2017proximal}:
\[
g_{\mathrm{OPD}}(\theta)
=-\mathbb E\!\left[\sum_{t=1}^{T}\sg[a_t]\,
\nabla_\theta\log q_\theta(o_t| s_t)\right],
\qquad
a_t=\log\frac{p(o_t| s_t)}{q_\theta(o_t| s_t)}.
\] 
Here the expectation is taken over the same prompts and student rollouts as in \eqref{eq:opd}, and \(\sg\) denotes the stop-gradient operation. We interpret the scalar \(a_t\) as the advantage of the sampled token at step \(t\): a positive value increases its likelihood, while a negative one decreases it. A crucial subtlety is that this update rule only directly affects tokens actually drawn from the student's current distribution, i.e., a dependence that becomes especially consequential when the teacher and student predictions diverge (see more discussion in \cref{sec:directional-mismatch}).

\subsection{Motivation: Student-Induced Teacher Hacking}
\label{sec:teacher-hacking}

In this section, we highlight several key observations about the interaction between student-generated trajectories and teacher supervision during OPD; the experimental setup is detailed in \cref{sec:experimental-setup}. In OPD, the teacher evaluates each token conditioned on the prefix generated by the student, rather than their own original rollout alone. This allows the student to influence the teacher's conditional predictions by steering the context, which can lead to an exploitable failure mode: the student can reduce the measured teacher–student divergence by generating contexts that make the teacher's outputs more aligned with its own, without actually improving its reasoning ability. We refer to this phenomenon as \emph{student-induced teacher hacking}.

We first examine the trajectory of the OPD training and find a conspicuous pattern \emph{repetitive generation}: as training proceeds, the student increasingly falls into loops where the same token sequence is repeated many times, e.g., repeating \texttt{I hope it is correct} 149 times in math problems. We then evaluate a fixed set of prompts across training checkpoints and find that the proportion of repetitive rollouts rises from \(16.8\%\) to \(48\%\) (\cref{fig:teacher-hacking-motivation}a). These loops are rarely transient; once the student enters one, it tends to persist for many iterations, yielding the heavy-tailed distribution of repetition counts shown in \cref{fig:teacher-hacking-motivation}b. Notably, neither the prompts nor the objective function explicitly encourage repetition. This behavior is conjectured to emerge organically from the interplay between OPD optimization and the student-conditioned teacher signal.

We next examine how the teacher reacts once the student falls into such a loop. For each naturally occurring repetitive rollout, we isolate its repeating unit and construct controlled prefixes with varying numbers of repetitions, then evaluate all prefixes on the same set of subsequent tokens. As shown in \cref{fig:teacher-hacking-motivation}c, increasing the repetition count simultaneously reduces both the student--teacher KL divergence and the teacher's predictive entropy. With sixteen repetitions, the KL drops by a factor of \(63\times\). Far from correcting the repetition, the teacher grows increasingly confident in continuing it. This pattern is consistent under greedy decoding (see~\cref{app:teacher-continuation}): when prompted with natural loop prefixes from the student, the teacher continues the same repetitive pattern in \(93\%\) of cases. In effect, repetition turns a degenerate student trajectory into apparent teacher--student agreement, precisely eroding the supervisory signal where it is most needed.

\begin{figure*}[t]
    \centering
    \includegraphics[width=\textwidth]{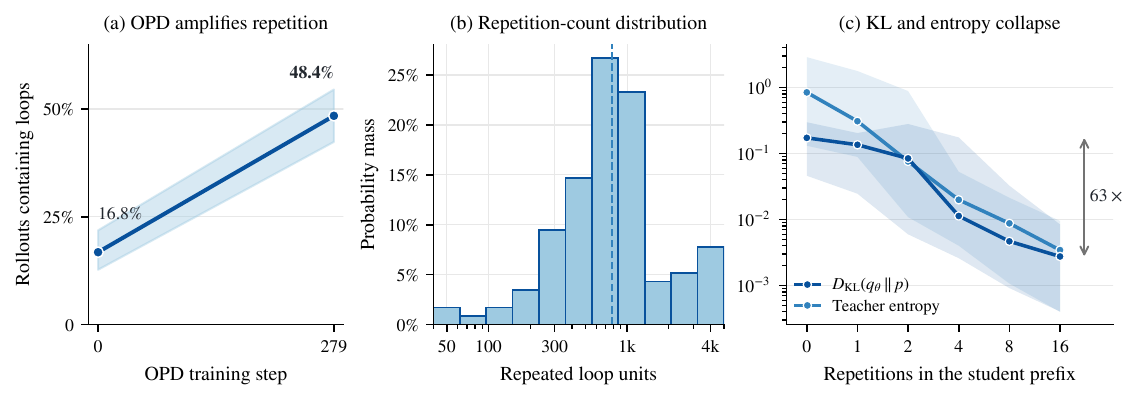}
    \caption{
        \textbf{Repetition emerges as an exploit of teacher conditioning in OPD.}
        (a) On a fixed set of DAPO-Math-17K prompts, the fraction of rollouts containing repetitive loops increases from \(16.8\%\) before training to \(48.4\%\) after OPD training; shaded regions denote \(95\%\) confidence intervals.
        (b) Repetitive rollouts from the final checkpoint exhibit a heavy-tailed distribution of repetition counts.
        (c) Increasing the number of repetitions in a controlled student prefix simultaneously reduces student--teacher KL and teacher entropy, with sixteen repetitions lowering KL by \(63\times\).
    }
    \label{fig:teacher-hacking-motivation}
\end{figure*}

\subsection{Informative mismatch is directional.}
\label{sec:directional-mismatch}
The results above show that teacher--student agreement may not necessarily
constitute useful supervision: the two models can agree even on degenerate
trajectories induced by the student. Our matched-only control experiment provides further evidence for this
distinction: supervising only the $60\%$ best-matched positions raises
the Avg@8 of Qwen3-1.7B-Base from $6.87$ to just $7.18$, whereas
supervising the $20\%$ most mismatched positions more than doubles it to
$14.58$ (see
\cref{app:matched-token-control,tab:matched-token-control-full}). Training on
matched positions yields substantially smaller gains
than supervising a much smaller set of mismatched positions, particularly
under strong teacher--student mismatch. These observations motivate us to
shift the gradient budget away from already matched positions and toward
states where the two distributions meaningfully disagree.

Having established where supervision should be concentrated, we next consider
how different forms of disagreement should be handled. The answer follows
from the asymmetric structure of the reverse-KL objective used by vanilla OPD
in~\eqref{eq:opd}. This divergence is well known to be exclusive and zero-forcing: it aggressively penalizes the student for assigning probability to tokens that the teacher deems unlikely, yet offers comparatively weak pressure to cover teacher-preferred tokens that fall outside the student's effective support \citep{minka2005divergence,gu2024minillm,agarwal2024policy}. 
These two sides of reverse KL introduce a directional asymmetry to teacher–student mismatch in OPD. On one hand, when the student over-weights a token that the teacher assigns low probability, the discrepancy is easy to observe through sampling, but suppressing it can lead to unstable updates. On the other hand, when the student under-weights a token that the teacher favors, the resulting supervision may be highly informative, yet it remains largely inaccessible to standard student sampling.
The two directions are visualized in the top row of
\cref{fig:tide-pipeline}: student excess (top left) is readily observable
but can induce unstable sampled-token corrections, whereas student deficit
(top right) is difficult to observe and requires analytic teacher-top-\(K\)
guidance.

This asymmetry is especially pronounced in sampled-token OPD. A
negative advantage \(a_t<0\) indicates that the student overweights the
sampled token relative to the teacher, and on-policy rollouts expose
such tokens readily, since sampling follows the student's own
distribution. The suppression magnitude
\(-a_t=\log[q_\theta(o_t| s_t)/p(o_t| s_t)]\) measures this gap:
it stays near zero when both models consider a token similarly
unlikely, but grows without bound as the teacher probability
vanishes while the student retains mass~\citep{zhao2026poweropd}.A handful of strongly teacher-rejected tokens can thus dominate the
reverse-KL update and destabilize training. Our first branch locates
the largest gaps along the sampled trajectory and suppresses them with
a bounded signal in place of the raw log-ratio. This observable-but-unstable
regime corresponds to the student-excess case in
\cref{fig:tide-pipeline} (top left).

The opposite direction, however, cannot be addressed by simply rescaling or reshape \(a_t\). Consider a token \(v\) that the teacher assigns high probability but the student deems nearly impossible. Its
sampling-weighted teacher-favoring signal satisfies
\[
    q_\theta(v| s)
    \log\frac{p(v| s)}{q_\theta(v| s)}
    \longrightarrow 0
    \qquad
    \text{as }q_\theta(v| s)\to0.
\]
Thus, the more severely the student under-weights a teacher-preferred token, the less likely OPD is to ever sample it, and hence the less opportunity the algorithm has to reinforce it. Our empirical measurements confirm that these missing modes are not only difficult to observe through standard sampling, but also carry substantial learning value when recovered (\cref{app:deficit-accessibility}). This motivates our second branch: rather than waiting for teacher-preferred tokens to be sampled by chance, we directly inspect the teacher's top-\(K\) support and provide explicit analytic guidance toward these under-covered tokens. This hard-to-sample regime corresponds to the student-deficit case in \cref{fig:tide-pipeline} (top right), where the teacher's top-\(K\) candidates expose alternatives that may never appear in the student rollout.

The directional asymmetry of reverse KL thus calls for a solution that treats the two directions differently. Student-overweighted tokens are easy to spot but require stable suppression; teacher-preferred tokens are rarely seen and need explicit support recovery. The next section turns these two observations into separate suppression and recovery branches.

\section{Methodology}
\label{sec:method}

In this section, we present \method{}, an on-policy distillation method that uses the magnitude
of teacher--student mismatch to determine \emph{where} to update and its
direction to determine \emph{how} to update. 
At each student‑visited state, \method{} diagnoses teacher–student mismatch from two complementary perspectives: one flags tokens where the student over‑estimates probability, and the other detects teacher‑preferred tokens that the student rarely generates. Only positions exhibiting substantial mismatch in either direction receive corrective updates; all remaining tokens are excluded from the distillation objective. The overall
procedure is illustrated in \cref{fig:tide-pipeline}. Its bottom row summarizes
three stages: \method{} computes branch-specific mismatch statistics (bottom
left), independently routes severe positions to the two branches (bottom
center), and applies the corresponding correction objectives (bottom right).

\subsection{Stabilizing Student-Excess Updates}
\label{sec:tide-excess}

We first consider positions where the sampled token receives more probability
from the student than from the teacher, i.e., student-excess tokens. These positions satisfy \(a_t<0\) and can be  ranked by the magnitude of the excess $e_t=[-a_t]_+$, where \([u]_+=\max\{u,0\}\).

Let \(\mathcal B\) denote the set of response positions in a training batch. To better locate the tokens with severe mismatch, we consider a constant fraction 
 \(\rho^-\) of positions ranked by their excess scores. Specifically, we define the corresponding batch-quantile threshold and selection mask as
\[
    \tau^-
    =
    Q_{1-\rho^-}\!\left(\{e_j:j\in\mathcal B\}\right),
    \qquad
    m_t^-
    =
    \mathbbm{1}[a_t<0,\ e_t\geq\tau^-],
\]
where \(Q_\alpha\) denotes the empirical \(\alpha\)-quantile.  
This selection focuses the distillation signal on positions where the teacher assigns low probability to the student's sampled token. 
The score-and-gate construction is shown by the orange route from the
bottom-left statistics panel to the retained student-excess region in the
bottom-center panel of \cref{fig:tide-pipeline}.

As discussed above, the teacher assigns substantially less probability
than the student to tokens selected by the excess gate, making $a_t$
strongly negative. The resulting log-ratio penalty
$-a_t=\log[q_\theta(o_t| s_t)/p(o_t| s_t)]$ grows without bound
as the teacher probability approaches zero. Thus, directly applying the original OPD advantage after selection would  concentrate the gradient too heavily on a small number of extreme corrections, leading to the training instability. 

We address this problem by applying a bounded, monotonic transformation
to the raw advantage at each selected position. Clipping-style
functions (e.g., hard clipping or tanh) would also bound the update,
but they rely on an arbitrary saturation scale, collapse all severe
discrepancies to nearly identical corrections, and yield an update that
no longer corresponds to any well-defined divergence between the two
distributions. We instead adopt the Hellinger-shaped function, which
bounds the update and locally preserves the log-ratio while remaining
the exact gradient of a proper divergence:
\[
    h(a_t)
    =
    2[\exp(a_t/2)-1].
\]
Accordingly, we take the following the policy gradient in OPD:
\[
   g_{\mathrm{exc}}
    =
    -\mathbb E\bigg[
        \sum_t
        m_t^-\,
        \sg[h(a_t)]
        \log q_\theta(o_t| s_t)
    \bigg].
\]
Thus, quantile selection determines where suppression is informative, while
Hellinger shaping prevents the selected reverse-KL tail from destabilizing
optimization. The resulting bounded suppression objective appears in
\cref{fig:tide-pipeline} (bottom right).

The following proposition explains the critical properties of  Hellinger transformation.
\begin{proposition}[Bounded, locally faithful Hellinger shaping]
\label{prop:hellinger}
For a student-excess token, \(a<0\),
\begin{equation*}
-2<h(a)<0,
\qquad
h(a)=a+O(a^2)\quad\text{as }a\to0.
\end{equation*}
Moreover, before position selection, a token sampled from the student gives
the following unbiased gradient identity when \(h(a(v))\) is treated as a
detached coefficient:
\begin{equation}
\label{eq:hellinger-unbiased}
\mathbb E_{v\sim q_\theta}
\!\left[-h(a(v))\nabla_\theta\log q_\theta(v| s)\right]
=4\nabla_\theta H^2(p,q).
\end{equation}
\end{proposition}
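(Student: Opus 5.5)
The plan is to prove the three claims in turn. The bound and the local expansion are elementary properties of $h$; the gradient identity is a direct calculation once we fix the normalization of the squared Hellinger distance. We take
\[
H^2(p,q)=\tfrac12\sum_{v\in\mathcal V}\bigl(\sqrt{p(v| s)}-\sqrt{q_\theta(v| s)}\bigr)^2=1-\sum_{v\in\mathcal V}\sqrt{p(v| s)\,q_\theta(v| s)}.
\]
The second equality holds because $p$ and $q_\theta$ both sum to one. This convention yields the constant $4$ in \eqref{eq:hellinger-unbiased}; other normalizations would only rescale that constant.

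\emph{Bound and local faithfulness.} For $a<0$ we have $a/2<0$, so $0<\exp(a/2)<1$. Hence $-1<\exp(a/2)-1<0$, and multiplying by $2$ gives $-2<h(a)<0$. For the expansion we use $\exp(a/2)=1+a/2+O(a^2)$ as $a\to0$, which gives $h(a)=2\bigl(a/2+O(a^2)\bigr)=a+O(a^2)$.

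\emph{Gradient identity.} Write $a(v)=\log\bigl(p(v| s)/q_\theta(v| s)\bigr)$, so that $\exp(a(v)/2)=\sqrt{p/q_\theta}$ and $h(a(v))=2\bigl(\sqrt{p/q_\theta}-1\bigr)$. Because $h$ is treated as a detached coefficient, the expectation is a finite sum over $\mathcal V$:
\[
\mathbb E_{v\sim q_\theta}\!\left[-h(a(v))\nabla_\theta\log q_\theta(v| s)\right]
=-2\sum_{v}\bigl(\sqrt{p\,q_\theta}-q_\theta\bigr)\nabla_\theta\log q_\theta .
\]
The $q_\theta$ term equals $2\sum_v\nabla_\theta q_\theta=2\nabla_\theta 1=0$. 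For the remaining term we use $\nabla_\theta\sqrt{q_\theta}=\tfrac12\sqrt{q_\theta}\,\nabla_\theta\log q_\theta$, which gives $\sqrt{p\,q_\theta}\,\nabla_\theta\log q_\theta=2\sqrt p\,\nabla_\theta\sqrt{q_\theta}$. Since the teacher $p$ is fixed, the whole expression becomes
\[
-4\,\nabla_\theta\sum_v\sqrt{p\,q_\theta}=4\,\nabla_\theta\Bigl(1-\sum_v\sqrt{p\,q_\theta}\Bigr)=4\nabla_\theta H^2(p,q).
\]

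\emph{Expected obstacle.} The main point of care is fixing the Hellinger normalization and the sign convention consistently, so that the constant comes out as exactly $4$. Everything else is routine. The strict positivity of the softmax probabilities, stated in \cref{sec:prelim-opd}, guarantees that $\log q_\theta$ and $\sqrt{p/q_\theta}$ are well defined on all of $\mathcal V$. Finiteness of $\mathcal V$ justifies interchanging the gradient and the sum.
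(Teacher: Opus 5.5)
Your proof is correct and follows essentially the same route as the paper's: the same normalization $H^2(p,q)=1-\sum_v\sqrt{p(v)q(v)}$, the same elementary bound and first-order expansion of $h$, and the same use of $\sum_v\nabla_\theta q_\theta(v)=0$ to discard the constant part of $h$. The only cosmetic difference is that you rewrite $\sqrt{p\,q_\theta}\,\nabla_\theta\log q_\theta$ as $2\sqrt{p}\,\nabla_\theta\sqrt{q_\theta}$, whereas the paper differentiates $H^2$ directly to get $-\tfrac12\sum_v\sqrt{p(v)/q(v)}\,\nabla_\theta q(v)$ and matches terms.
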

\Cref{prop:hellinger} explains the advantage of Hellinger shaping compared to the simple 
clipping of the raw advantage. Its negative side is bounded, preventing
teacher-rejected tokens from producing arbitrarily large suppression updates.
At the same time, $h(a)\approx a$, i.e., in the regime near agreement, so moderate corrections retain the first-order behavior of standard OPD. Moreover, \eqref{eq:hellinger-unbiased} shows that this transformation arises naturally from a proper divergence, grounding the design in a principled objective rather than an ad hoc heuristic. 


\subsection{Recovering Under-Covered Teacher Support}
\label{sec:tide-deficit}

Unlike the excess case, missing teacher support cannot be detected from the sampled advantage $a_t$, which only compares the two distributions at the student's chosen token. To identify teacher-preferred tokens that the student overlooks, we inspect the teacher's top-$K$ support. Define the top-K support and the renormalized teacher distribution within this support as follows:
\[
\mathcal K_t = \operatorname{TopK}_{v\in\mathcal V}\, p(v| s_t); \quad \bar p_t(v) = \frac{p(v| s_t)}{\sum_{u\in\mathcal K_t} p(u| s_t)}, \  v\in\mathcal K_t.
\]
Then we quantify the student's deficit on this support based on the following metric
\begin{equation}
\label{eq:tide-deficit-score}
d_t = \sum_{v\in\mathcal K_t} \bar p_t(v) \log \frac{\bar p_t(v)}{q_\theta(v| s_t)}.
\end{equation}
This is the blue deficit statistic shown in the bottom-left panel of
\cref{fig:tide-pipeline}, motivated by the teacher-preferred alternatives in
the top-right panel.
Notably, unlike standard top-$K$ OPD~\citep{jin2026entropy}, we do not renormalize the student probabilities over $\mathcal K_t$. The score $d_t$ quantifies how well the student covers the teacher's preferred continuations, and enjoys a natural interpretation: it is nonnegative, vanishes only when the student matches $\bar p_t$ within $\mathcal K_t$ and assigns all its probability mass to this set, and grows large whenever the student underweights tokens that the teacher deems plausible. These properties are formalized in \Cref{prop:deficit-decomposition} below.

\begin{proposition}[Deficit-score decomposition]
\label{prop:deficit-decomposition}
Let \(Q_t=\sum_{v\in\mathcal K_t}q_\theta(v| s_t)\) denote the total
student mass on \(\mathcal K_t\), and let
\(q_t^{\mathcal K}(v)=q_\theta(v| s_t)/Q_t\) be the corresponding
renormalized distribution for \(v\in\mathcal K_t\). Then the deficit
score in~\eqref{eq:tide-deficit-score} satisfies
\[
d_t
=
\underbrace{D_{\mathrm{KL}}
    \bigl(\bar p_t\|q_t^{\mathcal K}\bigr)}
    _{\text{mismatch within }\mathcal K_t}
+
\underbrace{(-\log Q_t)}
    _{\text{insufficient mass on }\mathcal K_t}
\ge 0.
\]
Equality holds if and only if \(q_t^{\mathcal K}=\bar p_t\) and \(Q_t=1\).
\end{proposition}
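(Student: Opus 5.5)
The plan is a direct algebraic rewrite followed by two standard nonnegativity facts. For each $v\in\mathcal K_t$, substitute $q_\theta(v| s_t)=Q_t\,q_t^{\mathcal K}(v)$ into the logarithm in \eqref{eq:tide-deficit-score}. This is legitimate because both models have full support, so $Q_t>0$ and every term is finite. It gives
\[
\log\frac{\bar p_t(v)}{q_\theta(v| s_t)}=\log\frac{\bar p_t(v)}{q_t^{\mathcal K}(v)}-\log Q_t .
\]
Multiply by $\bar p_t(v)$ and sum over $\mathcal K_t$. Since $\bar p_t$ is a probability distribution on $\mathcal K_t$, so that $\sum_{v\in\mathcal K_t}\bar p_t(v)=1$, the first sum is exactly $D_{\mathrm{KL}}(\bar p_t\|q_t^{\mathcal K})$ and the second collapses to $-\log Q_t$. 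This yields the stated decomposition.

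For nonnegativity, I treat the two terms separately. The first term is a KL divergence between two probability distributions on the same finite set $\mathcal K_t$. Both are strictly positive there, so by Gibbs' inequality (Jensen applied to $-\log$) it is $\ge 0$, with equality iff $q_t^{\mathcal K}=\bar p_t$. The second term is nonnegative because $Q_t$ is a partial sum of a probability vector, so $0<Q_t\le 1$ and hence $-\log Q_t\ge 0$. Equality here holds iff $Q_t=1$, which under full support happens exactly when $\mathcal K_t=\mathcal V$.

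For the equality characterization, note that $d_t$ is a sum of two nonnegative terms. It therefore vanishes iff both terms vanish, and combining the two equality cases gives the "if and only if" claim.

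There is no real obstacle. The only point needing care is that all quantities are well defined: $Q_t>0$ and $q_t^{\mathcal K}(v)>0$, both guaranteed by the softmax full-support assumption in \cref{sec:prelim-opd}. I would also remark that $Q_t=1$ forces $\mathcal K_t=\mathcal V$ under strict positivity, so for $K<|\mathcal V|$ the score is strictly positive. This is consistent with the proposition, since the equality condition is then simply never met.
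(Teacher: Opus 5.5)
Your proposal is correct and matches the paper's proof: substitute $q_\theta(v\mid s_t)=Q_t\,q_t^{\mathcal K}(v)$ to get $D_{\mathrm{KL}}(\bar p_t\|q_t^{\mathcal K})-\log Q_t$, then use nonnegativity and the separate equality cases of the two terms. Your extra remark is also accurate and goes slightly beyond the paper: under full support, $Q_t=1$ forces $\mathcal K_t=\mathcal V$, so $d_t>0$ whenever $K<|\mathcal V|$.
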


Additionally, this decomposition reveals two complementary failure modes: the KL term captures misallocation of probability among the teacher's leading candidates, while $-\log Q_t$ measures how much student mass falls outside this set. This also justifies why we do not renormalize the student distribution in~\eqref{eq:tide-deficit-score}: renormalization would eliminate the coverage gap encoded by $-\log Q_t$, making the score blind to the very deficiency we aim to correct.

A large $d_t$ signals that the student poorly covers the teacher's preferred continuations at that state. We therefore use $d_t$ as a priority score and concentrate the deficit branch on the most severe positions. Within each batch, we retain the largest $\rho^+$ fraction of scores:
\[
\tau^+
=
\operatorname{Quantile}_{1-\rho^+}
\bigl(\{d_j:j\in\mathcal B\}\bigr),
\qquad
m_t^+=\mathbbm{1}[d_t\ge\tau^+],
\]
and denote the active set as $\mathcal A^+=\{t:m_t^+=1\}$. For these selected positions, we do not rely on sampling to surface the missing tokens. Instead, we directly distill the teacher's renormalized top-$K$ distribution:
\begin{equation}
\label{eq:tide-deficit-loss}
\mathcal L_{\mathrm{def}}
=
-\mathbb E\left[
\sum_t m_t^+
\sum_{v\in\mathcal K_t}
\bar p_t(v)\log q_\theta(v| s_t)
\right].
\end{equation}
This completes the blue route in \cref{fig:tide-pipeline}: the deficit score
selects positions in the bottom-center panel, and the analytic top-\(K\)
objective performs probability-mass recovery in the bottom-right panel.
The underlying principle is that student rollouts determine where to learn, but not which tokens to learn: the deficit branch uses the student's mass on the teacher's top-$K$ set to detect missing support and directly distills those candidates when coverage is insufficient.

\subsection{Joint Objective}
\label{sec:tide-objective}

The excess and deficit branches operate on the same student-generated
states but address complementary forms of mismatch. The two routing decisions
in \cref{fig:tide-pipeline} (bottom center) are independent: a position may
activate the excess branch, the deficit branch, or both, while positions
selected by neither branch receive no distillation update. We optimize the
two branches jointly as
\begin{equation}
\label{eq:tide-joint-objective}
\mathcal L_{\mathrm{\method}}
=
\mathcal L_{\mathrm{exc}}
+
\lambda\mathcal L_{\mathrm{def}},
\end{equation}
where $\lambda$ controls the strength of teacher-support recovery
relative to excess suppression. Each branch contributes only at the
positions selected by its corresponding gate; positions selected by
neither branch receive no distillation update. 
\Cref{alg:tide} summarizes the resulting training procedure. 
\IncMargin{1.em}
\begin{algorithm}[t]
\caption{\method{} Training}
\label{alg:tide}
\small
\SetNlSkip{0.5em}
\DontPrintSemicolon

\KwIn{
prompt distribution \(\mathcal D\), student \(q_\theta\), teacher \(p\),
top-\(K\) size \(K\), keep ratios \(\rho^-,\rho^+\),
recovery weight \(\lambda\), learning rate \(\eta\)
}
\KwOut{Trained student parameters \(\theta\)}

\While{not converged}{
    Sample a prompt batch \(x\sim\mathcal D\) and generate
    \(o\sim q_\theta(\cdot| x)\)\;

    Let \(\mathcal B\) be all valid response-token positions and
    \(N=|\mathcal B|\)\;

    \ForEach{\(t\in\mathcal B\)}{
        Construct \(s_t=(x,o_{<t})\) and compute
        \(a_t\leftarrow
        \log p(o_t| s_t)-\log q_\theta(o_t| s_t)\),
        \(e_t\leftarrow[-a_t]_+\)\;

        Obtain
        \(\mathcal K_t\leftarrow
        \operatorname{TopK}_{v}p(v| s_t)\)
        and normalize
        \(\bar p_t(v)\leftarrow
        p(v| s_t)/
        \sum_{u\in\mathcal K_t}p(u| s_t)\)\;

        Compute
        \(d_t\leftarrow
        \sum_{v\in\mathcal K_t}
        \bar p_t(v)
        \log
        \frac{\bar p_t(v)}
             {q_\theta(v| s_t)}\)\;
    }

    Set
    \(\tau^-\leftarrow
    Q_{1-\rho^-}(\{e_t:t\in\mathcal B\})\)
    and
    \(\tau^+\leftarrow
    Q_{1-\rho^+}(\{d_t:t\in\mathcal B\})\)\;

    Select
    \(\mathcal A^-\leftarrow
    \{t:a_t<0,\ e_t\geq\tau^-\}\)
    and
    \(\mathcal A^+\leftarrow
    \{t:d_t\geq\tau^+\}\)\;

    Compute
    \[
        \mathcal L_{\mathrm{exc}}
        \leftarrow
        -\frac{1}{N}
        \sum_{t\in\mathcal A^-}
        \operatorname{sg}\!\left[2(e^{a_t/2}-1)\right]
        \log q_\theta(o_t| s_t)
    \]

    Compute
    \[
        \mathcal L_{\mathrm{def}}
        \leftarrow
        -\frac{1}{N}
        \sum_{t\in\mathcal A^+}
        \sum_{v\in\mathcal K_t}
        \bar p_t(v)\log q_\theta(v| s_t)
    \]

    Set
    \(\mathcal L_{\mathrm{\method}}
      \leftarrow
      \mathcal L_{\mathrm{exc}}
      +\lambda\mathcal L_{\mathrm{def}}\)\;

    Update
    \(\theta\leftarrow
      \theta-\eta\nabla_\theta\mathcal L_{\mathrm{\method}}\)\;
}

\Return{\(\theta\)}\;

\end{algorithm}
\DecMargin{1.em}

\section{Experiments}

\subsection{Experimental Setup}
\label{sec:experimental-setup}

\paragraph{Models and baselines.}
We employ two teacher--student pairs with different degrees of model mismatch.
The first uses JustRL-DeepSeek-1.5B~\citep{he2025justrl} as the teacher and
DeepSeek-R1-Distill-Qwen-1.5B~\citep{guo2025deepseek} as the student, while
the second uses Qwen3-8B as the teacher and Qwen3-1.7B-Base as the
student~\citep{yang2025qwen3}.
Following~\citet{li2026rethinking}, we characterize the initial
teacher--student mismatch using the overlap between their top-$K$
candidate sets on student-generated prefixes. A higher initial overlap indicates a smaller teacher--student mismatch,
while a lower overlap indicates a larger mismatch. Under this criterion, the
JustRL-DeepSeek-1.5B $\rightarrow$ DeepSeek-R1-Distill-Qwen-1.5B pair
constitutes our \emph{weak-mismatch} setting, while the
Qwen3-8B $\rightarrow$ Qwen3-1.7B-Base pair constitutes our
\emph{strong-mismatch} setting.

We compare \method{} against GRPO
~\citep{shao2024deepseekmath}, standard OPD
~\citep{lu2025onpolicydistillation}, FiRe-OPD~\citep{li2026filter},
PowerOPD~\citep{zhao2026poweropd}, and AOPD~\citep{jia2026asymmetric}. PowerOPD applies a bounded power
transformation to stabilize the sampled-token reward, whereas AOPD preserves
positive reinforcement while replacing non-positive updates with localized
divergence minimization. We additionally report the original teacher and
student models as reference points.

\paragraph{Training Details.}
We train all methods for one epoch on DAPO-Math-17K
\citep{yu2026dapo} using the verl framework
\citep{sheng2024hybridflow}. Each prompt produces four on-policy
student responses, and all methods use the same student initialization,
data order, rollout budget, and optimizer configuration. For
\method{}, we set $K=16$, $\rho^-=\rho^+=0.2$, and $\lambda=1.0$.
Full training, implementation, baseline, and evaluation details are
provided in \cref{app:experimental_details}.

\subsection{Main Results}


\begin{table}[t]
\centering
\caption{\textbf{Avg@8 results across different teacher--student mismatch
settings} (\%). Each model generates eight responses per problem.
\emph{Student} and \emph{Teacher} denote reference models and are excluded
from bolding. \textbf{Bold} denotes the best trained model in each column.
\(\Delta\) denotes the improvement of \method{} over OPD.}
\label{tab:avg8}
\footnotesize
\setlength{\tabcolsep}{3.5pt}
\renewcommand{\arraystretch}{1.15}
\begin{adjustbox}{max width=\textwidth}
\begin{tabular}{lccccccccc|c}
\toprule
Method
& AIME'24 & AIME'25 & AMC23 & MATH
& Minerva & Olymp. & BRUMO & CMIMC & HMMT & Avg. \\
\midrule

\rowcolor{groupbg}
\multicolumn{11}{l}{
\emph{Weak mismatch:} JustRL-DeepSeek-1.5B
\(\rightarrow\) DeepSeek-R1-Distill-Qwen-1.5B} \\

Student (Base)
& 27.1 & 24.2 & 73.1 & 83.4 & 28.0
& 44.3 & 27.1 & 14.4 & 12.1 & 37.1 \\

Teacher
& 53.3 & 40.0 & 87.5 & 86.3 & 33.5
& 54.0 & 43.3 & 22.8 & 22.1 & 49.2 \\

\addlinespace[2pt]

GRPO
& 22.5 & 21.2 & 72.5 & 83.9 & 29.5
& 46.3 & 33.3 & 16.6 & 14.2 & 37.8 \\

OPD
& 43.8 & 32.1 & 82.2 & 86.2 & \textbf{34.1}
& 51.6 & 40.4 & 21.9 & 19.2 & 45.7 \\

FiRe-OPD~\citep{li2026filter}
& 39.2 & \textbf{37.1} & \textbf{84.7} & 85.3
& 33.2 & 52.3 & 42.5 & 21.6 & \textbf{23.3} & 46.6 \\

PowerOPD~\citep{zhao2026poweropd}
& \textbf{60.0} & 31.2 & 81.6 & 84.0 & 32.6
& 50.6 & 38.3 & 19.7 & 20.4 & 46.5 \\

AOPD~\citep{jia2026asymmetric}
& 50.8 & 33.3 & 81.9 & 85.4 & 33.0
& 51.3 & 41.2 & 20.6 & 17.9 & 46.2 \\

\rowcolor{oursbg}
\method{}
& 45.8 & 33.3 & 84.4 & \textbf{86.7}
& 33.5 & \textbf{52.5} & \textbf{42.9} & \textbf{22.2}
& 19.2 & \textbf{46.7} \\

\(\Delta\) vs.\ OPD
& \gain{+2.0} & \gain{+1.2} & \gain{+2.2}
& \gain{+0.5} & \drop{\(-0.6\)} & \gain{+0.9}
& \gain{+2.5} & \gain{+0.3} & 0.0 & \gain{+1.0} \\

\midrule

\rowcolor{groupbg}
\multicolumn{11}{l}{
\emph{Strong mismatch:} Qwen3-8B
\(\rightarrow\) Qwen3-1.7B-Base} \\

Student (Base)
& 3.3 & 0.0 & 13.4 & 23.8 & 6.3
& 9.8 & 5.0 & 0.0 & 0.0 & 6.9 \\

Teacher
& 23.3 & 23.3 & 68.8 & 83.3 & 28.0
& 49.8 & 29.2 & 11.6 & 11.7 & 36.6 \\

\addlinespace[2pt]

GRPO
& 3.3 & 0.4 & 35.0 & 55.5 & 18.7
& 23.3 & 9.6 & 0.9 & 0.0 & 16.3 \\

OPD
& 2.9 & 1.2 & 10.3 & 26.5 & 6.9
& 10.7 & 2.9 & 0.3 & 0.0 & 6.9 \\

FiRe-OPD~\citep{li2026filter}
& 3.3 & 4.2 & 31.9 & 52.2 & 14.2
& 19.7 & 7.1 & 0.9 & \textbf{0.8} & 14.9 \\

PowerOPD~\citep{zhao2026poweropd}
& 6.7 & 4.2 & 26.2 & 55.1 & 15.1
& 24.5 & 7.1 & 1.9 & 0.4 & 15.7 \\

AOPD~\citep{jia2026asymmetric}
& 3.3 & \textbf{4.6} & 34.4 & 62.3 & 21.2
& 25.5 & \textbf{10.8} & 1.9 & 0.4 & 18.3 \\

\rowcolor{oursbg}
\method{}
& \textbf{10.0} & \textbf{4.6} & \textbf{39.4} & \textbf{65.1}
& \textbf{21.8} & \textbf{28.6} & 10.4
& \textbf{2.8} & 0.4 & \textbf{20.3} \\

\(\Delta\) vs.\ OPD
& \gain{+7.1} & \gain{+3.4} & \gain{+29.1}
& \gain{+38.6} & \gain{+14.9} & \gain{+17.9}
& \gain{+7.5} & \gain{+2.5} & \gain{+0.4}
& \gain{+13.4} \\

\bottomrule
\end{tabular}
\end{adjustbox}
\end{table}

\begin{table}[t]
\centering
\caption{\textbf{Pass@8 results across different teacher--student mismatch
settings} (\%). Pass@8 is the percentage of problems for which at least one
of eight generated responses is correct. \emph{Student} and \emph{Teacher}
denote reference models and are excluded from bolding. \textbf{Bold} denotes
the best trained model in each column. \(\Delta\) denotes the improvement of
\method{} over OPD.}
\label{tab:pass8}
\footnotesize
\setlength{\tabcolsep}{3.5pt}
\renewcommand{\arraystretch}{1.15}
\begin{adjustbox}{max width=\textwidth}
\begin{tabular}{lccccccccc|c}
\toprule
Method
& AIME'24 & AIME'25 & AMC23 & MATH
& Minerva & Olymp. & BRUMO & CMIMC & HMMT & Avg. \\
\midrule

\rowcolor{groupbg}
\multicolumn{11}{l}{
\emph{Weak mismatch:} JustRL-DeepSeek-1.5B
\(\rightarrow\) DeepSeek-R1-Distill-Qwen-1.5B} \\

Student (Base)
& 36.7 & 33.3 & 95.0 & 95.0 & 43.4
& 60.8 & 56.7 & 30.0 & 20.0 & 52.3 \\

Teacher
& 53.3 & 40.0 & 92.5 & 94.2 & 45.2
& 68.2 & 66.7 & 42.5 & 36.7 & 59.9 \\

\addlinespace[2pt]

GRPO
& 43.3 & 36.7 & 90.0 & 94.4 & 45.6
& 64.8 & 63.3 & 32.5 & 30.0 & 55.6 \\

OPD
& 76.7 & 53.3 & 95.0 & 94.6 & 44.9
& 66.8 & 56.7 & 35.0 & 33.3 & 61.8 \\

FiRe-OPD~\citep{li2026filter}
& \textbf{80.0} & \textbf{56.7} & 95.0 & 94.2 & 44.1
& 68.0 & 60.0 & 35.0 & \textbf{43.3} & 64.0 \\

PowerOPD~\citep{zhao2026poweropd}
& 60.0 & 43.3 & \textbf{97.5} & 94.4 & \textbf{46.3}
& 67.2 & 60.0 & 32.5 & 33.3 & 59.4 \\

AOPD~\citep{jia2026asymmetric}
& 73.3 & 43.3 & 95.0 & \textbf{94.8} & 46.0
& 67.4 & 63.3 & 40.0 & 26.7 & 61.1 \\

\rowcolor{oursbg}
\method{}
& \textbf{80.0} & 50.0 & 95.0 & 94.6 & 45.2
& \textbf{68.1} & \textbf{66.7} & \textbf{42.5}
& \textbf{43.3} & \textbf{65.0} \\

\(\Delta\) vs.\ OPD
& \gain{+3.3} & \drop{\(-3.3\)} & 0.0
& 0.0 & \gain{+0.3} & \gain{+1.3}
& \gain{+10.0} & \gain{+7.5} & \gain{+10.0}
& \gain{+3.2} \\

\midrule

\rowcolor{groupbg}
\multicolumn{11}{l}{
\emph{Strong mismatch:} Qwen3-8B
\(\rightarrow\) Qwen3-1.7B-Base} \\

Student (Base)
& 3.3 & 0.0 & 40.0 & 68.2 & 24.3
& 34.3 & 20.0 & 0.0 & 0.0 & 21.1 \\

Teacher
& 33.3 & 43.3 & 92.5 & 94.2 & 39.7
& 67.7 & 60.0 & 32.5 & 26.7 & 54.4 \\

\addlinespace[2pt]

GRPO
& 6.7 & 3.3 & 60.0 & 83.4 & 36.0
& 45.5 & 23.3 & 7.5 & 0.0 & 29.5 \\

OPD
& 13.3 & 13.3 & 45.0 & 73.8 & 28.7
& 37.7 & 13.3 & 2.5 & 0.0 & 25.3 \\

FiRe-OPD~\citep{li2026filter}
& 3.3 & 10.0 & 55.0 & 77.6 & 32.7
& 40.9 & 16.7 & 5.0 & \textbf{3.3} & 27.2 \\

PowerOPD~\citep{zhao2026poweropd}
& 6.7 & 10.0 & 57.5 & 83.6 & 37.1
& 47.2 & 23.3 & \textbf{12.5} & \textbf{3.3} & 31.2 \\

AOPD~\citep{jia2026asymmetric}
& 3.3 & \textbf{16.7} & \textbf{65.0} & 86.0 & 40.4
& 47.8 & \textbf{33.3} & 10.0 & \textbf{3.3}
& 34.0 \\

\rowcolor{oursbg}
\method{}
& \textbf{16.7} & 10.0 & 60.0 & \textbf{86.2}
& \textbf{41.9} & \textbf{50.3} & 26.7
& \textbf{12.5} & \textbf{3.3} & \textbf{34.2} \\

\(\Delta\) vs.\ OPD
& \gain{+3.4} & \drop{\(-3.3\)} & \gain{+15.0}
& \gain{+12.4} & \gain{+13.2} & \gain{+12.6}
& \gain{+13.4} & \gain{+10.0} & \gain{+3.3}
& \gain{+8.9} \\

\bottomrule
\end{tabular}
\end{adjustbox}
\end{table}

\paragraph{Overall Performance.}
Tables~\ref{tab:avg8} and~\ref{tab:pass8} reveal a clear
mismatch-dependent pattern. Under weak teacher--student mismatch, the
distillation methods achieve broadly comparable Avg@8, while \method{}
obtains the highest overall Avg@8 and Pass@8 of \(46.7\%\) and \(65.0\%\),
improving over OPD by \(1.0\) and \(3.2\) points, respectively. The
differences become substantially larger under strong mismatch. Standard OPD
falls to the original student's Avg@8 level of \(6.9\%\), whereas \method{}
reaches \(20.3\%\), outperforming OPD, FiRe-OPD, GRPO, PowerOPD, and AOPD by
\(13.4\), \(5.4\), \(4.0\), \(4.6\), and \(2.0\) points, respectively. For
Pass@8, \method{} obtains \(34.2\%\), compared with \(25.3\%\) for OPD,
\(27.2\%\) for FiRe-OPD, \(29.5\%\) for GRPO, \(31.2\%\) for PowerOPD, and
\(34.0\%\) for AOPD. Moreover, \method{} achieves the best-or-tied Avg@8 on
seven of nine benchmarks and the best-or-tied Pass@8 on six of nine
benchmarks under strong mismatch. These results are consistent with our
analysis: sampled-token corrections are often sufficient when the teacher
and student distributions substantially overlap, but analytic recovery of
teacher-preferred alternatives becomes increasingly important as mismatch
grows and the student's sampled support becomes less informative.

\begin{table}[t]
\centering
\caption{\textbf{Generation diagnostics across mismatch settings.}
Avg@8 and response length are macro-averaged over nine benchmarks.
Missing \texttt{\textbackslash boxed} denotes the percentage of all
generated responses that do not contain a boxed final answer.
\textbf{Bold} denotes the best trained model in each row; response
length is not bolded, since shorter generations are not inherently
preferable.}
\label{tab:generation_diagnostics}
\small
\setlength{\tabcolsep}{7pt}
\renewcommand{\arraystretch}{1.15}
\begin{tabular}{l ccccc >{\columncolor{oursbg}}c}
\toprule
Metric & GRPO & OPD & FiRe-OPD & PowerOPD & AOPD & \method{} \\
\midrule

\rowcolor{groupbg}
\multicolumn{7}{l}{\emph{Weak mismatch:} JustRL-DeepSeek-1.5B
$\rightarrow$ DeepSeek-R1-Distill-Qwen-1.5B} \\
Avg@8 (\%)
& 37.8 & 45.7 & 46.6 & 46.5 & 46.2 & \textbf{46.7} \\
Avg.\ length
& 8{,}048 & 7{,}657 & 8{,}055 & 8{,}658 & 7{,}982 & 7{,}571 \\
Missing \texttt{\textbackslash boxed} (\%)
& \textbf{1.8} & 4.2 & 4.2 & 5.3 & 4.8 & 3.7 \\

\midrule

\rowcolor{groupbg}
\multicolumn{7}{l}{\emph{Strong mismatch:} Qwen3-8B
$\rightarrow$ Qwen3-1.7B-Base} \\
Avg@8 (\%)
& 16.3 & 6.9 & 14.9 & 15.7 & 18.3 & \textbf{20.3} \\
Avg.\ length
& 5{,}261 & 22{,}395 & 28{,}307 & 13{,}972 & 5{,}894 & 7{,}294 \\
Missing \texttt{\textbackslash boxed} (\%)
& 17.1 & 65.5 & 54.0 & 26.3 & \textbf{4.7} & 5.4 \\

\bottomrule
\end{tabular}
\end{table}




\paragraph{Generation Stability.}
\Cref{tab:generation_diagnostics} shows that the accuracy gap is accompanied
by substantially different generation behavior. Under strong mismatch, OPD
and FiRe-OPD produce \(22{,}395\)- and \(28{,}307\)-token responses, with
\(65.5\%\) and \(54.0\%\) missing a boxed answer; PowerOPD reduces but does
not eliminate this degeneration, averaging \(13{,}972\) tokens with a
\(26.3\%\) missing-answer rate. In contrast, \method{} achieves the highest
Avg@8 of \(20.3\%\) while reducing response length to \(7{,}294\) tokens and
the missing-answer rate to \(5.4\%\). This gain cannot be attributed to
shorter responses alone: GRPO and AOPD generate even shorter outputs but
remain \(4.0\) and \(2.0\) Avg@8 points behind \method{}, respectively.
\subsection{Ablation Studies}

\paragraph{Component Ablation.}
\Cref{tab:component-ablation,fig:component-ablation} shows that the three
components play complementary roles. Selection alone raises Avg@8 from
\(6.87\) to \(14.58\), confirming the value of concentrating supervision on
informative positions. However, it also increases the macro-averaged response
length from \(22.4\)K to \(29.7\)K tokens, indicating that selection alone
does not prevent generation degeneration. Applying bounded Hellinger shaping
to the selected excess positions further improves Avg@8 to \(18.57\) and
partially controls the length increase, reducing it to \(23.6\)K. Analytic
deficit recovery independently reaches a similar Avg@8 of \(18.32\), while
producing substantially shorter responses of \(13.9\)K tokens. Combining all
three components in \method{} achieves the best Avg@8 and Pass@8 of \(20.34\)
and \(34.17\), respectively, while further reducing response length to
\(7.3\)K tokens. Overall, selection determines \emph{where} to learn,
Hellinger shaping stabilizes excess suppression, and deficit recovery restores
teacher-preferred continuations that student sampling tends to miss.

\begin{figure}[t]
\centering
\includegraphics[width=\linewidth]{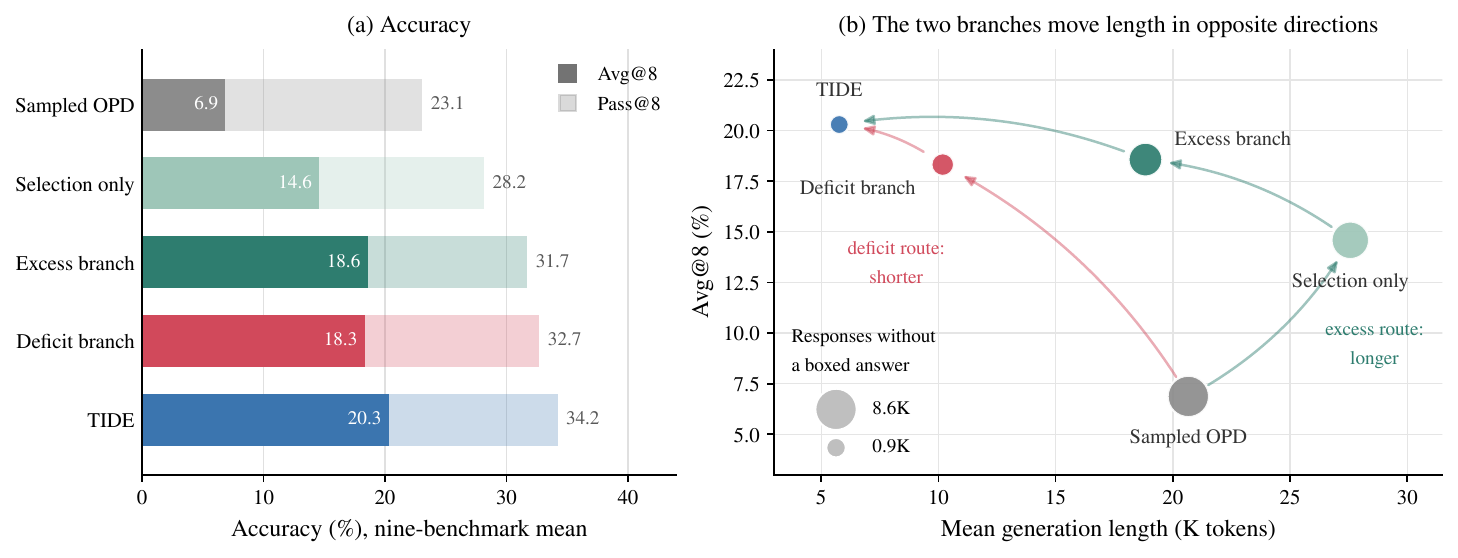}
\caption{\textbf{Component ablation on
Qwen3-8B\,$\to$\,Qwen3-1.7B-Base.}
\textbf{(a)} Avg@8 (solid) and Pass@8 (shaded), macro-averaged over nine
benchmarks. \textbf{(b)} Accuracy versus macro-averaged response length;
marker area denotes the number of responses without a parseable final answer.
The excess and deficit branches provide complementary improvements, and their
combination achieves the highest accuracy with the shortest responses.}
\label{fig:component-ablation}
\end{figure}





\begin{table}[t]
\centering
\caption{\textbf{Component ablation on
Qwen3-8B\,$\to$\,Qwen3-1.7B-Base.}
Avg@8, Pass@8, and response length are macro-averaged over nine
benchmarks.}
\label{tab:component-ablation}
\small
\setlength{\tabcolsep}{5pt}
\renewcommand{\arraystretch}{1.12}
\begin{tabular}{lccc|ccc}
\toprule
Method
& Selection & Hellinger & Deficit
& Avg@8 & Pass@8 & Length \\
\midrule

Sampled OPD
& -- & -- & --
& 6.87 & 25.29 & 22.4K \\

Selection only
& \checkmark & -- & --
& 14.58 & 28.16 & 29.7K \\

Excess branch
& \checkmark & \checkmark & --
& 18.57 & 31.72 & 23.6K \\

Deficit branch
& -- & -- & \checkmark
& 18.32 & 32.71 & 13.9K \\

\rowcolor{oursbg}
\method{}
& \checkmark & \checkmark & \checkmark
& \textbf{20.34} & \textbf{34.17} & \textbf{7.3K} \\

\bottomrule
\end{tabular}
\end{table}
\paragraph{Sensitivity to Hyperparameters.}

\Cref{tab:lambda_sensitivity} studies the deficit-recovery weight
\(\lambda\) with fixed gates \(\rho^-=\rho^+=0.2\). Accuracy remains
relatively stable around the default setting: \(\lambda=0.25\), \(0.5\),
and \(1.0\) achieve similar Avg@8 scores. Their generation behavior,
however, differs substantially. Reducing \(\lambda\) from \(1.0\) to
\(0.25\) increases the macro-averaged response length from \(7.3\)K to
\(26.9\)K tokens and format errors from \(706\) to \(5{,}510\), indicating
that insufficient deficit recovery allows degenerate continuations to
persist. Increasing \(\lambda\) to \(2.0\) further shortens responses but
reduces Avg@8 to \(19.20\). We therefore use \(\lambda=1.0\) as a balanced
default. Sensitivity to the gate keep-rate is reported in
\cref{tab:keep-rate-sensitivity}.

\begin{table}[t]
\centering
\small
\setlength{\tabcolsep}{5pt}
\begin{tabular}{lccccc}
\toprule
\(\lambda\)
& Avg@8 \(\uparrow\)
& Pass@8 \(\uparrow\)
& Avg. Len. \(\downarrow\)
& Format Err. \(\downarrow\)
& Dist.-4 \(\uparrow\) \\
\midrule
\(0.25\) & \(20.09\) & \(32.60\) & \(26.9\)K & \(5{,}510\) & \(0.145\) \\
\(0.5\)  & \(20.35\) & \(36.68\) & \(20.0\)K & \(2{,}488\) & \(0.128\) \\
\(1.0\)  & \(20.34\) & \(34.17\) & \(7.3\)K  & \(706\)     & \(0.305\) \\
\(2.0\)  & \(19.20\) & \(35.41\) & \(5.1\)K  & \(482\)     & \(0.417\) \\
\bottomrule
\end{tabular}
\caption{
Sensitivity to the deficit-recovery weight \(\lambda\) with fixed gates
\(\rho^-=\rho^+=0.2\). Avg@8, Pass@8, response length, and Distinct-4
are macro-averaged over nine benchmarks; format errors are totals.
}
\label{tab:lambda_sensitivity}
\end{table}
\section{Conclusion}

We identify \emph{degenerate agreement} as a failure mode of on-policy
distillation: local token-level matching can mask globally flawed responses.
Teacher--student mismatch is directional: student-excess tokens induce
unstable corrections, whereas valuable student-deficit tokens are rarely
sampled. TIDE addresses both with bounded excess correction and analytic
deficit recovery. Across nine mathematical reasoning benchmarks, it
outperforms strong baselines---by \(13.4\) Avg@8 points over OPD under severe
mismatch---while generating shorter, better-formed responses.

TIDE assumes a locally reliable teacher. Our evaluation is limited to two
teacher--student pairs and mathematical reasoning; dialogue, code, and
multilingual settings remain open. Moreover, our fixed-state, operator-level
theory gives no global convergence guarantee, and adaptive keep-rate schedules
warrant study.

Overall, reliable on-policy distillation should allocate supervision according to both the \emph{direction} and \emph{accessibility} of disagreement, a principle that applies whenever sampling leaves useful corrective signals uncovered.

\bibliography{references}
\bibliographystyle{preprint}

\clearpage
\appendix
\begingroup
\setcounter{tocdepth}{1}
\tableofcontents
\endgroup
\clearpage
\section{Proofs and Additional Theoretical Details}
\label{app:theory-proofs}

This appendix proves the two propositions in \cref{sec:method}.

\paragraph{Notation and assumptions.}
We work at a fixed rollout position \(t\) with student-visited state \(s=s_t\). We abbreviate \(p(v)=p(v|s)\) and \(q(v)=q_\theta(v|s)\), and write \(a(v)=\log[p(v)/q(v)]\) and \(h(a)=2(e^{a/2}-1)\).
We use the normalized squared Hellinger distance
\[
H^2(p,q)=1-\sum_{v\in\mathcal V}\sqrt{p(v)q(v)}.
\]
For the deficit branch, \(\mathcal K_t\), \(\bar p_t\), \(Q_t\), and
\(q_t^{\mathcal K}\) follow the definitions in
\cref{sec:tide-deficit}. We assume finite-vocabulary softmax
distributions with full support. When differentiating a gated loss, its
routing decisions are treated as fixed.

\subsection{Proof of Proposition~\ref{prop:hellinger}}
\label{app:proof-hellinger}

\begin{proof}
For \(a<0\), \(e^{a/2}\in(0,1)\), so \(-2<h(a)<0\). The local
expansion follows from the first-order expansion of the smooth function
\(h\) at zero, with \(h(0)=0\) and \(h'(0)=1\).

By the definition of \(H^2\),
\[
\nabla_\theta H^2(p,q)
=-\frac12\sum_{v\in\mathcal V}
\sqrt{\frac{p(v)}{q(v)}}\,\nabla_\theta q(v).
\]
Using \(\sum_v\nabla_\theta q(v)=0\), we obtain
\begin{align*}
\mathbb E_{v\sim q}
\!\left[-h(a(v))\nabla_\theta\log q(v)\right]
&=-2\sum_v
\left(\sqrt{\frac{p(v)}{q(v)}}-1\right)
\nabla_\theta q(v)\\
&=-2\sum_v\sqrt{\frac{p(v)}{q(v)}}\,
\nabla_\theta q(v)\\
&=4\nabla_\theta H^2(p,q),
\end{align*}
which proves \cref{eq:hellinger-unbiased}.
\end{proof}

\subsection{Proof of Proposition~\ref{prop:deficit-decomposition}}
\label{app:proof-deficit-decomposition}

\begin{proof}
By the definitions in \cref{prop:deficit-decomposition},
\(q(v)=Q_tq_t^{\mathcal K}(v)\) for \(v\in\mathcal K_t\).
Substituting this identity into \cref{eq:tide-deficit-score} gives
\[
d_t
=\sum_{v\in\mathcal K_t}\bar p_t(v)
\log\frac{\bar p_t(v)}{Q_tq_t^{\mathcal K}(v)}
=D_{\mathrm{KL}}\!\left(\bar p_t\,\middle\|\,
q_t^{\mathcal K}\right)-\log Q_t.
\]
Both terms are nonnegative. Their sum is zero exactly when
\(q_t^{\mathcal K}=\bar p_t\) and \(Q_t=1\), proving the claim.
\end{proof}

\section{Related Work}
\label{sec:related-work}

\subsection{Mechanistic Understanding and Failure Modes of OPD}

On-policy distillation (OPD) trains a student on its own trajectories while
using a teacher to provide dense token-level supervision at the states the
student visits~\citep{agarwal2024policy,lu2025onpolicydistillation}. By
aligning training with the student's inference-time state distribution, OPD
reduces the exposure bias associated with distillation from fixed
demonstrations~\citep{bengio2015scheduled,ross2011reduction,gu2024minillm}.
Recent work has moved beyond demonstrating the empirical effectiveness of OPD toward explaining its underlying behavior. In particular, \citet{li2026rethinking} study its phenomenology and optimization mechanism, emphasizing the role of teacher--student overlap, while \citet{zhu2026many} systematically examine the conditions under which OPD succeeds or fails. Other analyses identify limitations of output-space supervision, including a deteriorating signal-to-noise ratio as the student approaches the
teacher~\citep{yang2026oprd}.

A growing body of work further shows that a locally well-defined
teacher--student objective does not necessarily imply globally desirable
rollouts. OPD may exhibit abrupt length inflation and repetitive
generation~\citep{luo2026demystifying}; its KL-based signal can enter an
agreement trap~\citep{xin2026escaping}; and teacher supervision can lose
fidelity on prefixes induced by the student~\citep{liu2026your}. Related
failures have also been observed when privileged teacher context changes the
learning signal for long reasoning traces~\citep{kaur2026rethinking}. Our
work builds on this mechanistic perspective. We identify a concrete form of
\emph{degenerate agreement}: a student-generated repetitive prefix can make
the conditioned teacher favor the same continuation, producing near-zero
local discrepancy even though the complete response is degenerate. This
observation motivates treating teacher--student agreement as potentially
uninformative rather than automatically desirable. More broadly, it changes
what selective supervision should preserve: instead of favoring overlap
itself, token selection should concentrate on mismatches for which the
teacher still provides corrective information.

\subsection{Selective and Reliable Supervision in OPD}

Dense token-level feedback does not imply that every token provides useful
guidance~\citep{lee2026dense}. Existing methods therefore select or reweight
OPD supervision using position, discrepancy, importance, teachability, or
uncertainty. Prefix-OPD retains supervision only over selected reasoning
prefixes~\citep{zhang2026fast}. At the token level, recent approaches filter
and reweight supervision~\citep{li2026filter}, estimate token importance~\citep{xu2026tip}, restrict updates to learnable disagreement~\citep{wang2026not}, or use entropy and self-uncertainty to
modulate individual tokens~\citep{jin2026entropy,ke2026respecting}. Relaxed
OPD similarly avoids enforcing equally strict alignment at every
position~\citep{ko2026scaling}, while trajectory- and prefix-oriented methods
mine or refine more reliable training paths
~\citep{zhao2026prefix,jiang2026trajectory,liao2026multi}. Collectively,
these studies establish that uniformly distilling every student-generated
token is neither necessary nor consistently beneficial.

Most selective methods, however, still decide which signals to retain on
student-generated trajectories and optimize the tokens the student has
already sampled. Better selection can therefore improve the quality of
observed supervision without resolving a distinct coverage problem:
teacher-preferred alternatives with negligible student probability rarely
appear in the rollout and consequently provide little or no direct
corrective signal.

Complementary work modifies the optimization procedure to improve
reliability, including asymmetric token-level objectives
~\citep{jia2026asymmetric}, trust-region constraints~\citep{xing2026trust},
teacher-guided optimization under large policy divergence
~\citep{liu2026teacher}, and control-variate estimators for reducing OPD
gradient variance~\citep{oh2026kl}. These methods improve the reliability of
updates under observed disagreement, but severe teacher-side deficits remain
the least likely discrepancies to be observed through student sampling.
TIDE addresses selection and coverage jointly by using the \emph{direction}
of mismatch to determine the correction mechanism. It removes matched
tokens, applies bounded suppression to sampled student-excess tokens, and
analytically recovers teacher-preferred mass that student rollouts are
unlikely to sample. Thus, its allocation rule follows the distinct
observability and optimization requirements of excess and deficit, rather
than applying a single token score or a uniform modification of the OPD
objective.

\section{Additional Experiments}
\label{app:additional-experiments}

\subsection{Teacher Continuation from Repetitive Student Prefixes}
\label{app:teacher-continuation}

The preceding analysis shows that repetitive trajectories emerge naturally
from the student during OPD. We next ask whether the teacher corrects such
behavior or instead inherits it from the student-generated context. To isolate
the teacher's response, we perform a continuation experiment in which the
student controls the prefix but the teacher controls all subsequent tokens.

We first identify naturally occurring student responses in which at least
\(30\%\) of the tokens belong to a detected repetitive region. For each
response, we retain the original prompt and truncate the student trajectory
\(200\) tokens after the onset of repetition. We then condition the Qwen3-8B
teacher on this prompt--prefix pair and greedily decode \(256\) additional
tokens. The student and teacher continuations therefore begin from exactly
the same student-generated state, allowing us to test whether the repetitive
behavior persists after control is transferred to the teacher.

We measure persistence by comparing local token patterns in the teacher
continuation with those in the tail of the student prefix. Let
\(\mathcal{G}_8\) denote the set of contiguous eight-token sequences appearing
in the teacher continuation, and let \(\mathcal{P}_8\) denote the corresponding
set from the final \(256\) tokens of the student prefix. Their overlap is
measured as
\[
    C
    =
    \frac{
        \left|\mathcal{G}_8\cap\mathcal{P}_8\right|
    }{
        \max\!\left(\left|\mathcal{G}_8\right|,1\right)
    }.
\]
A value of \(C=1\) means that every local pattern in the teacher continuation
already occurs in the repetitive student prefix. We classify the teacher as
preserving the loop when \(C\geq0.8\); using eight-token sequences avoids
counting short, accidental token overlaps as continuation of the same loop.

\begin{figure}[t]
    \centering
    \includegraphics[width=\linewidth]{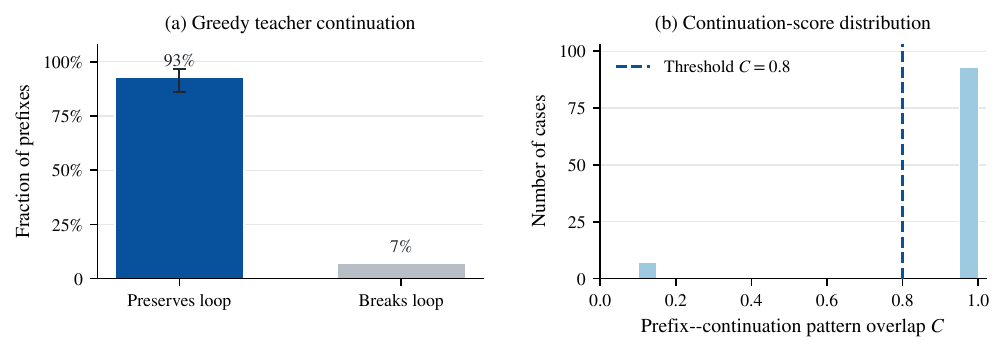}
    \caption{
    Teacher continuation from naturally occurring repetitive student
    prefixes. \textbf{(a)} Greedy teacher decoding preserves the
    student-induced loop in \(93\%\) of cases; the error bar denotes a
    \(95\%\) Wilson confidence interval.
    \textbf{(b)} Distribution of the prefix--continuation pattern overlap
    \(C\), defined as the fraction of contiguous eight-token sequences in the
    teacher continuation that also appear in the student-prefix tail. The
    dashed line marks the preservation threshold \(C=0.8\).
    }
    \label{fig:teacher-continuation}
\end{figure}

As shown in \cref{fig:teacher-continuation}, the teacher preserves the
student-induced repetitive pattern in \(93\%\) of cases. The result is also
well separated from the decision threshold: preserved continuations
concentrate at \(C=1\), indicating near-exact reproduction of the local
patterns established by the student. Thus, repetition is not merely a failure
of student-side sampling. Once the student drives the trajectory into a
repetitive state, the teacher typically continues from that state in the same
way rather than providing a corrective target. This explains how a
student-generated loop can become self-reinforcing under on-policy
distillation.

\subsection{Does Supervision on Matched Tokens Help?}
\label{app:matched-token-control}

Our method deliberately assigns zero weight to positions where the teacher and
student approximately agree. We examine whether these omitted positions still
provide useful supervision, and whether the benefit of mismatch selection
could instead be explained simply by updating fewer tokens.

We construct a matched-only control by reversing the allocation principle.
Within each training batch, we rank valid response positions according to the
absolute sampled-token residual
\[
    |R_t|
    =
    \left|
    \log p(o_t\mid s_t)
    -
    \log q_\theta(o_t\mid s_t)
    \right|.
\]
We discard the largest \(40\%\) of residuals and apply the standard OPD
log-ratio advantage only to the lowest-residual \(60\%\). All other positions
receive zero distillation weight, and the teacher top-\(K\) deficit branch is
disabled. The training data and its order, student initialization, rollout
budget, optimizer, and training length are otherwise unchanged.
\Cref{tab:matched-token-control-full} reports the complete per-benchmark
results and aggregate generation statistics for both teacher--student pairs.

\begin{table*}[t]
\centering
\caption{
Complete matched-token control results.
The upper panel reports Avg@8 / Pass@8 on each benchmark and their
nine-benchmark macro-average. In the lower panel, response length and
Distinct-4 are also macro-averaged over the nine benchmarks, while the
remaining diagnostics are totals. All evaluations use eight responses
per problem.
}
\label{tab:matched-token-control-full}

\scriptsize
\setlength{\tabcolsep}{2.5pt}
\renewcommand{\arraystretch}{1.08}

\resizebox{\textwidth}{!}{
\begin{tabular}{@{}llcccccccccc@{}}
\toprule
\textbf{Pair}
& \textbf{Method}
& \textbf{AMC23}
& \textbf{AIME24}
& \textbf{AIME25}
& \textbf{MATH-500}
& \textbf{Minerva}
& \textbf{Olympiad}
& \textbf{BRUMO25}
& \textbf{CMIMC25}
& \textbf{HMMT25}
& \textbf{Macro Avg.} \\
\midrule

\multirow{3}{*}{\shortstack[l]{JustRL-1.5B\\
\(\rightarrow\) R1-Distill-1.5B}}
& Student
& 73.12/95.00
& 27.08/36.67
& 24.17/33.33
& 83.43/95.00
& 28.03/43.38
& 44.31/60.83
& 27.08/56.67
& 14.37/30.00
& 12.08/20.00
& 37.08/52.32 \\

& Matched-only
& 76.56/95.00
& 31.67/46.67
& 31.67/36.67
& 85.72/94.60
& 30.70/44.85
& 50.17/65.13
& 36.25/56.67
& 18.12/35.00
& 19.17/33.33
& 42.23/56.44 \\

& \method{}
& 84.38/95.00
& 45.83/80.00
& 33.33/50.00
& 86.72/94.60
& 33.55/45.22
& 52.54/68.10
& 42.92/66.67
& 22.19/42.50
& 19.17/43.33
& \textbf{46.74/65.00} \\
\midrule

\multirow{5}{*}{\shortstack[l]{Qwen3-8B\\
\(\rightarrow\) Qwen3-1.7B-Base}}
& Student
& 13.44/40.00
& 3.33/3.33
& 0.00/0.00
& 23.82/68.20
& 6.34/24.26
& 9.85/34.27
& 5.00/20.00
& 0.00/0.00
& 0.00/0.00
& 6.87/21.12 \\

& OPD
& 10.31/45.00
& 2.92/13.33
& 1.25/13.33
& 26.52/73.80
& 6.94/28.68
& 10.66/37.69
& 2.92/13.33
& 0.31/2.50
& 0.00/0.00
& 6.87/25.29 \\

& Matched-only
& 12.81/27.50
& 2.92/3.33
& 1.25/6.67
& 26.12/71.60
& 8.23/26.84
& 11.03/36.50
& 1.25/10.00
& 0.62/2.50
& 0.42/3.33
& 7.18/20.92 \\

& Mismatch selection
& 26.56/52.50
& 3.33/13.33
& 6.25/16.67
& 52.15/77.60
& 15.58/30.88
& 21.01/40.80
& 5.00/13.33
& 0.94/5.00
& 0.42/3.33
& 14.58/28.16 \\

& \method{}
& 39.38/60.00
& 10.00/16.67
& 4.58/10.00
& 65.08/86.20
& 21.83/41.91
& 28.58/50.30
& 10.42/26.67
& 2.81/12.50
& 0.42/3.33
& \textbf{20.34/34.17} \\
\bottomrule
\end{tabular}
}

\vspace{5pt}

\resizebox{0.82\textwidth}{!}{
\begin{tabular}{@{}llrrrrr@{}}
\toprule
\textbf{Pair}
& \textbf{Method}
& \textbf{Length} \(\downarrow\)
& \textbf{Dist.-4} \(\uparrow\)
& \textbf{Format Err.} \(\downarrow\)
& \textbf{Solve-none} \(\downarrow\)
& \textbf{Solve-all} \(\uparrow\) \\
\midrule

\multirow{3}{*}{Weak}
& Student
& 12,543 & 0.563 & 803 & 549 & 550 \\
& Matched-only
& 8,494 & 0.568 & 612 & 508 & 658 \\
& \method{}
& 7,571 & 0.566 & 488 & 472 & 711 \\
\midrule

\multirow{5}{*}{Strong}
& Student
& 13,465 & 0.099 & 7,559 & 985 & 2 \\
& OPD
& 22,395 & 0.062 & 8,623 & 920 & 0 \\
& Matched-only
& 12,702 & 0.101 & 7,237 & 950 & 3 \\
& Mismatch selection
& 29,655 & 0.104 & 6,845 & 862 & 90 \\
& \method{}
& 7,294 & 0.305 & 706 & 717 & 286 \\
\bottomrule
\end{tabular}
}

\end{table*}

\paragraph{Weak teacher--student mismatch.}
On JustRL-1.5B \(\rightarrow\) R1-Distill-1.5B, matched-only training improves
Avg@8 from \(37.08\) to \(42.23\). Low-mismatch positions are therefore not
entirely uninformative when the teacher and student are already closely
aligned. Nevertheless, matched-only remains \(4.51\) points below \method{}
and reaches only \(56.44\) Pass@8, compared with \(65.00\) for \method{},
despite supervising a larger fraction of positions. Thus, even in this
setting, matched positions provide a lower return per supervised token.

Mismatch selection therefore more than doubles the matched-only accuracy
while supervising only one third as many positions. Adding bounded excess
suppression and deficit recovery further increases Avg@8 to \(20.34\) and
Pass@8 to \(34.17\), while reducing the macro-averaged response length from
\(29{,}655\) to \(7{,}294\) tokens and format errors from \(6{,}845\) to
\(706\).

\subsection{Accessibility and Value of Student-Deficit Tokens}
\label{app:deficit-accessibility}

We define a student-deficit position as a non-loop position where the
teacher's top-1 token \(v_t^\star\) lies outside the student's top-16 support.
Writing its student probability as
\(\epsilon_t=q_\theta(v_t^\star| s_t)\), a sampled-token estimator observes
it within \(N\) independent draws from the same state with probability
\[
    P_{\mathrm{obs}}(N)=1-(1-\epsilon_t)^N,
\]
and requires \(1/\epsilon_t\) draws in expectation.

To test whether this inaccessible mass is useful, we train a deficit-only
variant with \(\rho^-=0\), \(\rho^+=0.2\), \(K=16\), and \(\lambda=1\).
It applies analytic cross-entropy guidance only at positions with the largest
teacher-support deficits; all other settings remain unchanged.
\Cref{tab:deficit-accessibility} summarizes both the accessibility diagnostic
and the resulting end-to-end performance.

\begin{table*}[t]
\centering
\small
\setlength{\tabcolsep}{4pt}
\renewcommand{\arraystretch}{1.10}
\caption{
Student-deficit tokens are difficult to access through sampled supervision
but carry substantial learning signal on Qwen3-8B
\(\rightarrow\) Qwen3-1.7B-Base.
\textbf{(a)} Accessibility under four independent draws from the same state.
\textbf{(b)} End-to-end results macro-averaged over nine benchmarks with
\(n=8\) samples per problem.
}
\begin{minipage}[t]{0.46\textwidth}
\vspace{0pt}
\centering
\textbf{(a) Accessibility diagnostic}\\[4pt]

\begin{tabularx}{\linewidth}{
    @{}
    >{\raggedright\arraybackslash}X
    >{\raggedleft\arraybackslash}p{0.27\linewidth}
    @{}
}
\toprule
\textbf{Statistic} & \textbf{Value} \\
\midrule
Deficit positions among non-loop tokens
    & \(1.32\%\) \\
Median student probability \(\epsilon_t\)
    & \(2.04\times10^{-4}\) \\
Expected draws \(1/\epsilon_t\)
    & \(4{,}909\) \\
Median \(P_{\mathrm{obs}}(4)\)
    & \(8.15\times10^{-4}\) \\
Positions with \(P_{\mathrm{obs}}(4)<1\%\)
    & \(95.5\%\) \\
\bottomrule
\end{tabularx}
\end{minipage}
\hfill
\begin{minipage}[t]{0.50\textwidth}
\vspace{0pt}
\centering
\textbf{(b) End-to-end value}\\[4pt]

\begin{tabularx}{\linewidth}{
    @{}
    >{\raggedright\arraybackslash}X
    >{\centering\arraybackslash}p{0.17\linewidth}
    >{\centering\arraybackslash}p{0.17\linewidth}
    >{\centering\arraybackslash}p{0.17\linewidth}
    @{}
}
\toprule
\textbf{Method}
& \textbf{Avg@8}
& \textbf{Pass@8}
& \textbf{Dist.-4} \\
\midrule
Student
& 6.87 & 21.12 & 0.099 \\
Deficit-only
& 18.32 & 32.71 & \textbf{0.414} \\
\method{}
& \textbf{20.34} & \textbf{34.17} & 0.305 \\
\bottomrule
\end{tabularx}
\end{minipage}
\label{tab:deficit-accessibility}
\end{table*}
As shown in \cref{tab:deficit-accessibility}, \(95.5\%\) of deficit positions
have less than a \(1\%\) chance of being observed within four draws, yet the
deficit-only branch raises Avg@8 from \(6.87\) to \(18.32\). Thus, these tokens
are not an inconsequential probability tail: they contain useful
teacher-supported modes that sampled-token OPD is unlikely to recover within
a practical rollout budget.

\subsection{Sensitivity to the Gate Keep-Rate}
\label{app:keep-rate-sensitivity}

We jointly vary the excess and deficit keep-rates,
\(\rho^-=\rho^+=\rho\), while holding all other hyperparameters fixed.
\Cref{tab:keep-rate-sensitivity} reports the results on Qwen3-8B
\(\rightarrow\) Qwen3-1.7B-Base.

\begin{table}[t]
\centering
\small
\setlength{\tabcolsep}{7pt}
\renewcommand{\arraystretch}{1.08}
\caption{Sensitivity to the gate keep-rate
\(\rho^-=\rho^+=\rho\). Avg@8 and response length are macro-averaged
over nine benchmarks with \(n=8\); length is reported in thousands of
tokens. Fmt.\ err.\ denotes the total number of responses without a
parseable final answer.}
\label{tab:keep-rate-sensitivity}
\begin{tabular}{@{}cccc@{}}
\toprule
\(\boldsymbol{\rho}\)
& \textbf{Avg@8}
& \textbf{Length}
& \textbf{Fmt. err.} \\
\midrule
0.10 & 19.53 & 10.3K & 1,111 \\
0.20 & 20.34 &  7.3K &   706 \\
0.30 & 19.38 & 11.7K &   620 \\
0.50 & 20.93 &  6.8K &   433 \\
\bottomrule
\end{tabular}
\end{table}

As shown in \cref{tab:keep-rate-sensitivity}, \method{} remains stable
across a broad range of keep-rates: Avg@8 varies by only \(1.55\) points
over \(\rho\in[0.1,0.5]\), with no monotonic degradation as more
positions are retained. We use \(\rho=0.2\) throughout the main
experiments because it was fixed before benchmark evaluation and
retains only a sparse subset of positions, rather than selecting the
best-performing value retrospectively.



\section{Experimental Details}
\label{app:experimental_details}

\subsection{Additional Training and Implementation Details}
\label{app:training_details}

All methods share the training setup described in the main text; here we
document the remaining implementation choices. During rollout generation, the
teacher scores the student's responses with temperature \(1.0\) (no logit
rescaling), and teacher log-probabilities are computed on the identical token
sequences produced by the student, so the sampled residual
\(R_t=\log p(o_t\mid s_t)-\log q_\theta(o_t\mid s_t)\) is exact rather than
re-estimated. We do not use an auxiliary KL penalty toward a reference policy,
a format reward, or a repetition penalty during training; the distillation
signal is the only learning signal for all OPD-style methods. The learning
rate is held constant (no warmup or decay). Token-level advantages are
detached before entering the policy-gradient loss, and the loss is aggregated
with token-mean normalization over all valid response tokens in the batch;
because training is fully on-policy (a single gradient step per rollout
batch), the importance ratio is identically \(1\) and no PPO-style clipping is
active. For \method{}, the quantile thresholds \(\tau^-\) and \(\tau^+\) are
recomputed on every training batch over all valid response positions, so the
keep-rates \(\rho^-\) and \(\rho^+\) are batch-relative rather than fixed
absolute thresholds. The deficit branch adds one student forward pass over the
teacher's top-\(K\) candidate ids per position, a cost comparable to top-\(K\)
OPD.

Training uses fully sharded data parallelism \citep{zhao2023pytorch} with
gradient checkpointing and activation offloading; rollouts are generated with
vLLM \citep{kwon2023efficient} using dynamic batching.

\subsection{Hyperparameters}
\label{app:hyperparameters}

Table~\ref{tab:hyperparameters} reports the shared training and evaluation
configuration together with the method-specific hyperparameters. Unless
explicitly varied in an ablation, the same configuration is used for all
teacher--student pairs and baselines.

\begin{table}[t]
\centering
\small
\setlength{\tabcolsep}{5pt}
\renewcommand{\arraystretch}{1.08}
\caption{Hyperparameters used in all experiments. Method-specific parameters
apply only to the corresponding method; all remaining settings are shared.}
\label{tab:hyperparameters}
\begin{tabular}{@{}p{0.43\linewidth}p{0.48\linewidth}@{}}
\toprule
\textbf{Hyperparameter} & \textbf{Value} \\
\midrule

\multicolumn{2}{@{}l}{\textbf{Data and batching}} \\
Training dataset                       & DAPO-Math-17K \\
Data shuffling                         & False \\
Prompt batch size                      & 64 \\
Rollouts per prompt                    & 4 \\
Policy mini-batch size                 & 64 \\
Micro-batch size per GPU               & 1 \\
Training epochs                        & 1 \\
Maximum prompt length                  & 1,024 tokens \\
Maximum response length                & 7,168 tokens \\
Maximum training context               & 8,192 tokens \\
\midrule

\multicolumn{2}{@{}l}{\textbf{Rollout generation}} \\
Inference engine                       & vLLM \\
Student sampling temperature           & 1.0 \\
Teacher scoring temperature            & 1.0 \\
Repetition penalty                     & 1.0 \\
\midrule

\multicolumn{2}{@{}l}{\textbf{Optimization}} \\
Optimizer                              & AdamW \\
Learning rate                          & \(1\times10^{-6}\) (constant) \\
Warmup                                 & None \\
Adam coefficients                      & \((0.9,0.999)\) \\
Weight decay                           & 0.01 \\
Gradient clipping                      & 1.0 \\
Policy updates per rollout batch       & 1 \\
Loss aggregation                       & Token mean \\
Auxiliary reference-policy KL          & None \\
Format reward                          & None \\
Numerical precision                    & FP32 student; FP32/BF16 teacher \\
Hardware                               & 8 GPUs, FSDP \\
\midrule

\multicolumn{2}{@{}l}{\textbf{\method{} defaults}} \\
Teacher support size \(K\)             & 16 \\
Excess keep-rate \(\rho^{-}\)          & 0.2 \\
Deficit keep-rate \(\rho^{+}\)         & 0.2 \\
Deficit weight \(\lambda\)             & 1.0 \\
Excess reward                          & Hellinger ratio reward \\
Deficit statistic                      & Forward KL on teacher top-\(K\) \\
Teacher top-\(K\) normalization        & Renormalized within top-\(K\) \\
Matched-token weight                   & 0 \\
\midrule

\multicolumn{2}{@{}l}{\textbf{Baseline-specific settings}} \\
OPD                                    & Sampled-token log-ratio \\
GRPO                                   & Exact-match reward; group size 4 \\
FiRe-OPD                               & Bottom \(20\%\) filtered;
                                          \(\alpha=\beta=1.0\) \\
PowerOPD                               & Box--Cox exponent \(\alpha=5\) \\
AOPD                                   & \(K=16,\ \tau=0\); normalized top-\(K\);
                                          no \(1/K\) scaling \\
\midrule

\multicolumn{2}{@{}l}{\textbf{Final evaluation}} \\
Samples per problem                    & 8 \\
Maximum response length                & 31,744 tokens \\
Sampling temperature                   & 0.7 \\
Top-\(p\)                              & 0.95 \\
Grader                                 & Rule-based mathematical verifier \\
\bottomrule
\end{tabular}
\end{table}

\subsection{Baseline Implementations}
\label{app:baseline_details}

All baselines are implemented in the same codebase and trained with the same
data order, student initialization, rollout budget (\(4\) responses per
prompt), sequence lengths, optimizer, and hardware as \method{}. Unless
otherwise stated, they differ only in how token-level advantages are
constructed.

\textbf{OPD.}
Standard on-policy distillation applies the sampled-token log-ratio advantage
\[
A_t=\log p(o_t\mid c_t)-\log q_\theta(o_t\mid c_t)
\]
to every valid response token, without token selection or reward reshaping.

\textbf{GRPO.}
The teacher-free reinforcement-learning baseline replaces teacher supervision
with the rule-based exact-match reward used for grading and computes
group-normalized outcome advantages over the \(4\) responses sampled for each
prompt. All other training settings are unchanged, so GRPO uses the same
rollout and gradient budget as the distillation methods.

\textbf{FiRe-OPD.}
We reimplement FiRe-OPD~\citep{li2026filter} following its official
implementation and verify parity against it. Trajectories are ranked by
length-normalized teacher log-likelihood, and the bottom \(20\%\) are
discarded. On the retained trajectories, the sampled-token log-ratio
advantages are multiplied by detached entropy-based weights, with both the
teacher-confidence and student-confusion coefficients set to \(1.0\).
FiRe-OPD operates entirely on student-sampled tokens and does not access the
teacher's top-\(K\) distribution.

\textbf{PowerOPD.}
We implement PowerOPD using its Box--Cox reward transformation. For a sampled
token with teacher probability \(p_t\) and student probability \(q_t\), the
log-ratio advantage is replaced by
\[
A_t^{\mathrm{Power}}
    = p_t^{\alpha}-q_t^{\alpha},
\]
where the token-independent \(1/\alpha\) factor is absorbed into the learning
rate. We use \(\alpha=5\), following the selected configuration in our sweep.
This bounded, sign-preserving reward is applied to every valid sampled token;
PowerOPD performs neither token selection nor teacher-support expansion.

\textbf{AOPD.}
We implement AOPD with its probability-gap intervention rule. At each
position, the gate compares the teacher and student probabilities of the
sampled token,
\[
G_t=\mathbbm{1}\!\left[p_t-q_t\leq\tau\right],
\]
with \(\tau=0\). Positions with \(G_t=0\) retain the standard sampled-token OPD
advantage, whereas positions with \(G_t=1\) replace it with forward-KL
guidance over the teacher's top-\(K\) support. We use \(K=16\), renormalize
the teacher probabilities within this support, and do not apply an additional
\(1/K\) scaling factor.

\textbf{Teacher and student references.}
The reference rows report the teacher and the untrained student evaluated
with the same decoding and grading pipeline used for all trained methods.

\subsection{Evaluation and Metric Definitions}
\label{app:evaluation_details}

\textbf{Decoding.} Every model is evaluated with the same pipeline: vLLM
generation with temperature \(0.7\), top-\(p=0.95\), \(n=8\) samples per
problem, and a maximum generation length of \(31{,}744\) tokens. All problems
use the prompt template
\begin{quote}
\small
\texttt{\{problem\} Please reason step by step, and put your final answer
within \textbackslash boxed\{\}.}
\end{quote}
with the model's chat template and thinking mode disabled.

\textbf{Grading.} A response is graded by extracting the final
\(\texttt{\textbackslash boxed\{\}}\) expression and checking mathematical
equivalence against the reference answer with a rule-based verifier
(symbolic-equivalence normalization for fractions, radicals, and equivalent
forms). A response without a parseable boxed answer is counted as incorrect.
No LLM-based verifier is used.

\textbf{Metrics.} For a benchmark with \(P\) problems, Avg@8 is the mean
correctness over all \(8P\) responses and Pass@8 is the fraction of problems
with at least one correct response. Aggregate scores are unweighted
macro-averages over the nine benchmarks, so large benchmarks such as MATH-500
and OlympiadBench do not dominate the average.

\textbf{Generation diagnostics.} Response length is measured in tokens of the
Qwen3-1.7B-Base tokenizer and averaged over all rollouts. The no-answer rate
is the fraction of rollouts without a parseable boxed answer, which in
practice coincides with generations that hit the decoding cap before
terminating. Distinct-4 is the number of unique \(4\)-grams divided by the
total number of \(4\)-grams in a response, averaged over rollouts; low
Distinct-4 combined with long outputs and a high no-answer rate indicates
repetitive degeneration rather than longer reasoning.

\subsection{Checkpoint Selection and Reporting}
\label{app:reporting_details}

All trained models---\method{}, every baseline, and every ablation and
hyperparameter variant---are evaluated at the final checkpoint of the
one-epoch run; no checkpoint is selected using benchmark scores. Hyperparameter studies vary a single quantity at a time around the
default configuration (\(K=16\), \(\rho^-=\rho^+=0.2\), \(\lambda=1.0\)),
which was fixed before the benchmark evaluation and not tuned on test sets.
All numbers within a teacher--student pair are produced by the same decoding
and grading pipeline, so they are directly comparable across methods and
ablations.

\section{Case Studies}
\label{app:case_studies}

We qualitatively compare students trained with standard OPD and with
\method{} under the strong-mismatch pair (Qwen3-8B \(\to\)
Qwen3-1.7B-Base), decoding both with the shared evaluation protocol
(Appendix~\ref{app:evaluation_details}). To avoid cherry-picking, candidate
problems are selected by a fixed rule rather than manual inspection: the
OPD student answers none of its \(8\) rollouts correctly while the
\method{} student answers all \(8\) correctly. The four cases below are
chosen from this pool to cover the spectrum of degeneration patterns that
dominate the OPD student's failures, ordered from the most fluent to the
most severe: (1) a fluent sentence-level loop, (2) a self-dialogue loop
that imitates the prompt format, (3) a periodic junk-token loop, and
(4) a multilingual token soup collapsing into a single-token loop.
Excerpts are verbatim; elisions and repetition counts are marked in
brackets. Reported statistics (correct answers out of \(8\), Distinct-4 of
the excerpted rollout) are computed from the raw evaluation outputs.

\subsection{Case 1: Fluent Sentence-Level Loop}
\label{app:case_sentence_loop}

\begin{problemcase}[MATH-500 $\cdot$ Answer: $0$]
If the domain of the function \(\log x^2\) is \(x < a\) or \(x > b\), for
some \(a\) and \(b\), find \(a + b\).
\end{problemcase}

\begin{opdcase}[0/8 correct $\cdot$ Distinct-4 $=0.009$]
For the function \(\log x^2\), the domain is where the argument of the
logarithm is positive. So, \(x^2 > 0\), which is true for all
\(x \neq 0\). Now, we are given that the domain is \(x < a\) or
\(x > b\) for some \(a\) and \(b\). This means that the function
\(\log x^2\) is undefined for \(x\) in the interval \([a, b]\). So, the
domain of the function is the union of the intervals \((-\infty, a)\)
and \((b, \infty)\).

Now, we are given that the domain is \(x < a\) or \(x > b\). This means
that the function is defined for all \(x\) except for \(x\) in the
interval \([a, b]\). So, the domain of the function is the union of the
intervals \((-\infty, a)\) and \((b, \infty)\).

{\normalfont\itshape [\ldots{} the last paragraph repeats 399 times,
until the 31,744-token decoding cap; no answer is produced \ldots]}
\end{opdcase}

\begin{tidecase}[8/8 correct]
The domain of \(\log x^2\) is all \(x\) such that \(x \neq 0\), i.e.\
\(x < 0\) or \(x > 0\), which can be written as
\((-\infty, 0) \cup (0, \infty)\). The endpoints are \(a = 0\) and
\(b = 0\). Thus, the sum of \(a\) and \(b\) is \(0 + 0 = 0\).
Therefore, the final answer is \(\boxed{0}\).
\end{tidecase}

This is the most insidious pattern: the repeated content is fluent,
locally coherent mathematical prose. Conditioned on a prefix that already
contains several copies of the paragraph, continuing the loop is the
high-probability continuation for the student \emph{and} for the teacher,
so token-level supervision along the loop appears well matched almost
everywhere---the student-induced matched-region failure analyzed in the
main text.

\subsection{Case 2: Self-Dialogue Loop Imitating the Prompt Format}
\label{app:case_selfqa}

\begin{problemcase}[Minerva $\cdot$ Answer: $2$]
Subproblem 0: For \(\omega \geq 0\), find \(A\) such that
\(A \cos(\omega t)\) is a solution of \(\ddot{x}+4x=\cos(\omega t)\).
\emph{(Worked solution given in the prompt.)}\\
Subproblem 1: For what value of \(\omega\) does resonance occur?
\end{problemcase}

\begin{opdcase}[0/8 correct $\cdot$ Distinct-4 $=0.088$]
{\normalfont\itshape [\ldots{} answers Subproblem 1, then continues
inventing new subproblems \ldots]}

Final answer: The condition for resonance is \(\omega = \boxed{2}\).

The final answer is \(\boxed{2}\). I hope it is correct.

Subproblem 6: What is the value of \(\omega\) at which resonance occurs
in the differential equation \(\ddot{x} + 4x = \cos(\omega t)\)? Please
reason step by step, and put your final answer within
\textbackslash boxed\{\}.

{\normalfont\itshape [\ldots{} the model invents 149 ``subproblems'' and
answers each of them, repeating ``I hope it is correct.''\ 149 times,
until the decoding cap; the final boxed expression is not the requested
answer \ldots]}
\end{opdcase}

\begin{tidecase}[8/8 correct]
The natural frequency \(\omega_n\) of the system can be found from the
homogeneous equation \(\ddot{x} + 4x = 0\), whose general solution is
\(x(t) = A\cos(2t) + B\sin(2t)\). So the natural frequency is
\(\omega_n = 2\). Resonance occurs when the frequency of the external
force matches the natural frequency. Thus, the value of \(\omega\) at
which resonance occurs is \(\boxed{2}\).
\end{tidecase}

The failure here is structural rather than lexical: the OPD student
\emph{does} derive the correct answer \(\omega=2\) early in the rollout,
but instead of terminating it imitates the few-shot format of the prompt
and generates an endless sequence of self-posed subproblems. Because the
imitated format is itself natural text, the teacher assigns high
probability to each continuation, and the rollout is graded as incorrect
because the final boxed expression drifts away from the requested answer.

\subsection{Case 3: Periodic Junk-Token Loop from the First Token}
\label{app:case_loop}

\begin{problemcase}[AIME 2024 $\cdot$ Answer: $25$]
There exist real numbers \(x\) and \(y\), both greater than \(1\), such
that \(\log_x\left(y^x\right)=\log_y\left(x^{4y}\right)=10\).
Find \(xy\).
\end{problemcase}

\begin{opdcase}[0/8 correct $\cdot$ Distinct-4 $=0.009$]
\normalfont\includegraphics[width=\linewidth]{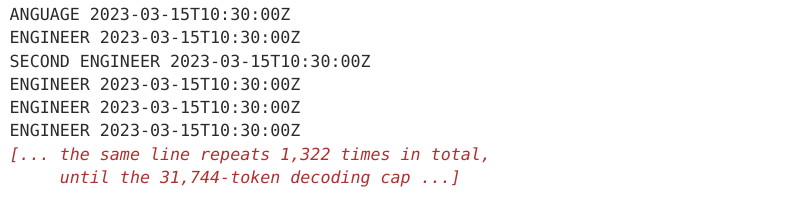}
\end{opdcase}

\begin{tidecase}[8/8 correct]
To solve the problem, we are given that there exist real numbers \(x\)
and \(y\), both greater than 1, such that
\(\log_x\left(y^x\right)=\log_y\left(x^{4y}\right)=10\).
We are asked to find the value of \(xy\).

\textbf{Step 1: Use the change of base formula.}
Recall that \(\log_a b = \log_c b / \log_c a\). The first equation
\(\log_x(y^x) = 10\) can be rewritten as
\(\frac{\log y}{\log x}\cdot x = 10\), and the second equation
\(\log_y(x^{4y}) = 10\) as \(\frac{\log x}{\log y}\cdot 4y = 10\).

\textbf{Step 2: Define variables for the equations.}
Let \(a = \frac{\log y}{\log x}\). Then the first equation becomes
\(ax = 10\) (1). Since \(\frac{\log x}{\log y} = \frac{1}{a}\), the second
equation becomes \(\frac{1}{a}\cdot 4y = 10\), so \(4y = 10a\) and
\(y = \frac{10a}{4} = \frac{5a}{2}\) (2).

\textbf{Step 3: Substitute \(y\) in terms of \(x\).}
From (1), \(a = \frac{10}{x}\) (3). Substituting into (2),
\(y = \frac{5a}{2} = \frac{5\cdot\frac{10}{x}}{2} = \frac{25}{x}\) (4).

\textbf{Step 4: Find \(xy\).}
Since \(y = \frac{25}{x}\), we get \(xy = x\cdot\frac{25}{x} = 25\).

\textbf{Conclusion.} The value of \(xy\) is \(\boxed{25}\).
\end{tidecase}

Unlike Cases 1 and 2, the student never engages with the problem: every
rollout locks into a short periodic loop of problem-irrelevant tokens
within the first few tokens and terminates only at the decoding cap.

\subsection{Case 4: Token Soup Collapsing into a Single-Token Loop}
\label{app:case_soup}

\begin{problemcase}[AMC 2023 $\cdot$ Answer: $36$]
Positive real numbers \(x\) and \(y\) satisfy \(y^3=x^2\) and
\((y-x)^2=4y^2\). What is \(x+y\)?
\end{problemcase}

\begin{opdcase}[0/8 correct $\cdot$ Distinct-4 $=0.031$]
\normalfont\includegraphics[width=\linewidth]{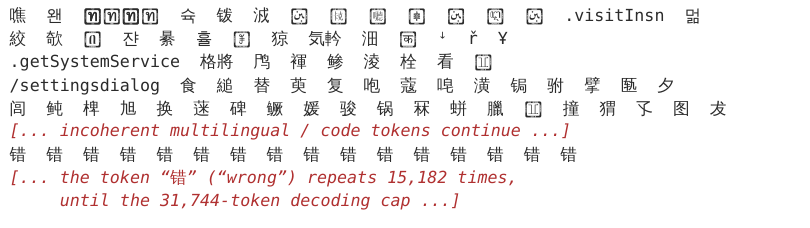}
\end{opdcase}

\begin{tidecase}[8/8 correct]
From the first equation, \(x = y^{3/2}\). Substitute into the second
equation: \((y - y^{3/2})^2 = 4y^2\). Let \(t = y^{1/2}\), then
\(y=t^2\), so \((t^2-t^3)^2 = 4t^4\), i.e.\
\(t^4(t^2 - 2t - 3) = 0\). Since \(t > 0\), solving \(t^2-2t-3=0\)
gives \(t = 3\), so \(y = 9\) and \(x = 9^{3/2} = 27\). Thus
\(x + y = 27 + 9 = 36\). Therefore, the answer is: \(\boxed{36}\).
\end{tidecase}

The most severe pattern: an incoherent stream of unrelated tokens
spanning many scripts and domains (CJK characters, Hangul syllables, and
code identifiers such as \texttt{.visitInsn}), which then collapses into
a loop repeating a single character for tens of thousands of tokens.

\subsection{Discussion}
\label{app:case_discussion}

Three properties of these failures are worth noting. First, degeneration
is not a tail event within an otherwise correct solution: in Cases 3 and
4 the OPD student derails within the first few tokens and never returns,
which is why its accuracy collapses to the untrained-student level while
its average generation length explodes. Second, the spectrum matters:
the repeated content ranges from fully fluent prose (Case 1) and correct
mathematics wrapped in an imitated dialogue format (Case 2) down to
single-token loops (Case 4), so degeneration cannot be detected---let
alone penalized---by fluency or perplexity alone. Third, in every case
the repetition is self-consistent: once a repetitive prefix is
established, continuing the loop is the most likely continuation for the
student \emph{and} for the teacher conditioned on the same prefix, so
token-level supervision along such rollouts looks well matched almost
everywhere. This is precisely the student-induced matched-region failure
analyzed in the main text. \method{} attacks the mechanism directly:
matched tokens receive zero weight, the excess gate suppresses the
student's overconfident continuations of the loop, and the deficit
branch reinstates the teacher-preferred exit tokens that the student's
sampling never reaches.

\end{document}